\declaretheorem{theorem}
\newtheorem{remark}{Remark}[section]
\newtheorem{corollary}{Corollary}[theorem]
\patchcmd{\@maketitle}{\LARGE \@title}{\fontsize{18}{19.2}\selectfont\@title}{}{}
\begin{document}

\title{Deep Learning without Global Optimization \\by Random Fourier Neural Networks}

\author[2]{Owen Davis\thanks{corresponding author: ondavis@sandia.gov}}
\author[2]{Gianluca Geraci\thanks{ggeraci@sandia.gov}}
\author[1]{Mohammad Motamed\thanks{motamed@unm.edu}}

\affil[1]{Department of Mathematics and Statistics, The University of New Mexico, Albuquerque, NM, USA}
\affil[2]{Sandia National Laboratories Department of Optimization and Uncertainty Quantification, Albuquerque, NM, USA}

\date{}

\maketitle

\begin{abstract}
We introduce a new training algorithm for deep neural networks that utilize random complex exponential activation functions. Our approach employs a Markov Chain Monte Carlo sampling procedure to iteratively train network layers, avoiding global and gradient-based optimization while maintaining error control. It consistently attains the theoretical approximation rate for residual networks with complex exponential activation functions, determined by network complexity. Additionally, it enables efficient learning of multiscale and high-frequency features, producing interpretable parameter distributions. Despite using sinusoidal basis functions, we do not observe Gibbs phenomena in approximating discontinuous target functions.
\end{abstract}

\medskip
\noindent
{\bf keywords:}
Deep neural networks, Markov Chain Monte Carlo sampling, random Fourier neural networks, sampling-based network training

\medskip
\noindent
{\bf MSCcodes:}
65T40, 90C15, 65C05, 65C40, 60J22, 68T07

\section{Introduction}

For several decades, global gradient descent-based optimization algorithms have been commonly used for deep neural network training primarily due to their empirical success in solving the very high-dimensional non-convex optimization problems that network training requires \cite{OND:goodfellow2016deep}. Despite this, these algorithms are not without drawbacks. Global gradient-based optimization can be very expensive for deep neural networks, the progression of training as well as the learned parameters are often difficult to interpret \cite{zhang2021survey}, and the performance of the algorithms regularly depends on several pre-training hyperparameters for which informed a priori choices are unavailable.

In addition to the above, it has been shown that deep neural networks, especially those using global gradient-based training algorithms, struggle to learn high-frequency and multiscale features of target functions with reasonable computational complexity, a phenomenon that is referred to as {\it spectral bias}; see e.g., \cite{rahaman2019spectral,basri2020frequency,xu2019frequency}. There has been considerable work towards understanding and mitigating spectral bias in both standard and physics informed neural networks; see e.g., \cite{cao2019towards,hong2022activation,wang2021understanding,wang2022and,tancik2020fourier,howard2023stacked,kammonen2024comparing}. In \cite{cao2019towards, wang2022and}, the authors establish a connection between spectral bias and the neural tangent kernel (NTK) \cite{jacot2018neural}, which describes the limiting behavior of the gradient-based training. The latter work further develops an adaptive procedure for weighing terms in the physics informed loss function by leveraging eigenvalues of the NTK. In \cite{kammonen2024comparing}, it is shown that an adaptively randomized training algorithm can alleviate spectral bias in one hidden layer neural networks as compared to training the same network with a global gradient-based optimizer. In \cite{hong2022activation}, it is shown that spectral bias is also related to the activation function of the network, and in particular that choosing a hat function activation over a ReLU activation can alleviate spectral bias. In \cite{howard2023stacked}, it is shown that training a sequence of deep neural networks in which the learned parameters in one network are used to initialize the parameters in the next can facilitate the learning of multiscale target function features and improve physics informed training. Perhaps most relevant to this work, the authors in \cite{tancik2020fourier} show that leveraging random Fourier features as a positional encoding strategy facilitates the learning of high frequency target function features.

In this work, we aim to address training issues related to spectral bias, completely sidestepping global and gradient-based optimization, while also enhancing interpretability in network training and the learned parameters. 
To accomplish this, we develop a global optimization-free training algorithm with error control for deep residual networks that utilize randomized complex exponential activation functions. Such activation functions are also known as random Fourier features \cite{rahimi2007random}. 
These networks, which we call ``random Fourier Neural Networks'' (rFNNs), were introduced in \cite{deepFF_kammonen}, following inspiration from their shallow counterparts in \cite{rahimi2008random,1layerFF_kammonen}. They exhibit similar approximation properties to ReLU networks \cite{Davis_Motamed:2024} and have the capability of effectively capturing high-frequency and multiscale features without excessive network complexity \cite{tancik2020fourier}. 
Our training algorithm employs a Markov Chain Monte Carlo (MCMC) sampling procedure to iteratively train network segments by sampling random frequencies associated with each of the complex exponential activation functions from an optimally derived distribution. 
We leverage the developed algorithm to learn a variety of target functions that show off various aspects of its behavior: 1) Across all test cases we achieve, and in several instances outperform, the only existing theoretical approximation rate for this network type \cite{deepFF_kammonen}; 2) We simultaneously capture both high- and low-frequency features of varying scales with minimal network complexity; 3) Our learned network parameters offer an interpretable frequency decomposition of the target function; and 4) Remarkably, despite utilizing a sinusoidal approximation basis, we do not observe Gibbs phenomena \cite{grafakos2008classical} in approximating discontinuous target functions.

\medskip

\noindent{\bf Relation to other work.} This work develops a global optimization-free network training algorithm with error control, realizing, and in several instances outperforming, the theoretical approximation rates for deep rFNNs derived in \cite{deepFF_kammonen}. There are several existing works that consider the problem of training one hidden layer neural networks using random features; see e.g. \cite{rahimi2008random,1layerFF_kammonen, li2019learning, gonon2023random, chen2022bridging, chen2023random}. In \cite{rahimi2008random,gonon2023random, chen2022bridging, chen2023random}, the network weights in a one hidden layer network are sampled randomly and only the output weights (coefficients of the random features) are trainable. This is done for purely data-driven networks in \cite{rahimi2008random,gonon2023random} and in a physics informed context for both stationary and time dependent partial differential equations in \cite{chen2022bridging,chen2023random}. 
In \cite{1layerFF_kammonen, li2019learning}, the authors instead seek to learn optimal random features from available training data. In \cite{li2019learning}, this is done via a gradient-based algorithm in a kernel regression setting, and in \cite{1layerFF_kammonen}, this is accomplished via a Metropolis algorithm.

The authors of  \cite{deepFF_kammonen} utilize the Metropolis algorithm from \cite{1layerFF_kammonen} to sequentially train segments of a deep random Fourier neural network by optimally sampling a selected subset of its frequency parameters. However, when used alone, this Metropolis algorithm is not demonstrated to achieve the theoretical approximation rate derived in \cite{deepFF_kammonen}. To achieve this rate, the authors employ the Metropolis procedure as a network initialization, subsequently applying a standard stochastic gradient descent-based method for global optimization of all network parameters. Inspired by the findings in \cite{1layerFF_kammonen, deepFF_kammonen}, we introduce an iterative MCMC technique that enables training of deep random Fourier neural networks without the need for subsequent global optimization, and meeting or exceeding the theoretical approximation rate. In Section 4.2, we conduct a direct comparison between the Metropolis approach in \cite{deepFF_kammonen} and our new training algorithm, revealing faster approximation rates and improved capabilities for capturing sharp and discontinuous features.

Our iterative approach to network training also has some high level similarities to both stacking networks \cite{howard2023stacked} and Galerkin networks \cite{ainsworth2021galerkin}. Both of these frameworks conduct iterative training of a sequence of neural networks and, therefore, come with a degree of error control, much like our proposed training algorithm. However, our work is different in several key ways. 

We sequentially train segments of a unified deep neural network using an MCMC sampling method, contrasting with Galerkin and stacking network paradigms that utilize gradient-based optimizers to train successive neural networks. Moreover, our training procedure yields interpretable frequency decompositions of the target function, a feature lacking in stacking and Galerkin networks. This work contributes to the growing literature at the intersection of Fourier analysis and deep learning; see e.g., \cite{sitzmann2020implicit,li2020fourier,li2021learnable}.
Each of these works successfully leverage Fourier representations of the target function to modify and greatly improve conventional global gradient-based network training for standard neural networks and neural fields \cite{sitzmann2020implicit}, operator learning frameworks \cite{li2020fourier}, and transformers \cite{li2021learnable}. Our work is similar to these works in the sense that we too leverage Fourier analysis in the context of deep learning, but our motivation is quite different; instead of leveraging Fourier analysis in conjunction with standard neural network training methodologies, we use it to sidestep conventional  global gradient-based training entirely. 

The rest of this work proceeds as follows. In Section~\ref{sec:random_FF_networks}, we define random Fourier neural networks and present their theoretical approximation rate. Section~\ref{sec:training_algorithm_design} offers a detailed exposition of our proposed training algorithm, concurrently developing the theoretical background that supports it. Section \ref{sec:numerics} showcases the developed training algorithm on a variety of numerical examples. Finally, concluding remarks and directions for future work are provided in Section \ref{sec:conclusion}.

\section{Random Fourier Neural Networks}\label{sec:random_FF_networks}

In this section, we define rFNNs and the space of target functions they can effectively model. We also present their existing generalization error estimate, which will aid in assessing the convergence rate of our proposed training algorithm.

\subsection{Target function space}

We consider approximating functions belonging to
\begin{equation}\label{target_function_space}
    S=\{Q:{\mathbb R}^d\to\mathbb{R}: ||Q||_{L^{1}(\mathbb{R}^d)}<\infty, \, ||\hat{Q}||_{L^{1}(\mathbb{R}^d)} < \infty\},
\end{equation}
where $\hat{Q}$ is the Fourier transform of $Q$. This function space can be succinctly described all absolutely integrable functions on $\mathbb{R}^d$ with absolutely integrable Fourier transform. These functions need only be continuous almost everywhere, so most common discontinuous functions appearing in science and engineering tasks are included.

\subsection{Definition of rFNNs}

Following \cite{deepFF_kammonen} we define the Fourier features activation function $s:\mathbb{R}\to \mathbb{C}$ by 
\begin{equation}\label{eqn:FF_activation}
    s(x) = e^{ix}, \qquad x\in \mathbb{R}.
\end{equation} 

An rFNN $\Phi$ having depth $L\geq 1$, width $W\geq 1$, and approximating a function $Q\in S$ is a network consisting of $L$ blocks, where the first block has one hidden layer with $W$ neurons, and the remaining blocks have one hidden layer with $2W$ neurons. The network realizes the function
\begin{equation}\label{eqn:rFNN}
    Q_{\Phi}(\theta) = z_{L}(\theta), \qquad \theta\in \mathbb{R}^d,
\end{equation}
where $z_{L}$ results from the recursive scheme
\begin{align*}
    z_{1}(\theta)&=\underbrace{\Re\sum_{j=1}^{W}b_{1,j}s(\omega_{1j} \cdot \theta)}_{g_{1}(\theta;\: \bm{\omega}_{1},\bm{b}_{1})};\\
    z_{\ell}(\theta) &= z_{\ell-1}(\theta) +\underbrace{\Re\sum_{j=1}^{W}b_{\ell,j}s(\omega_{\ell j}\cdot \theta)}_{g_{\ell}(\theta;\:\bm{\omega}_{\ell},\bm{b}_{\ell})}+\underbrace{\Re\sum_{j=1}^{W}b_{\ell,j}'s(\omega_{\ell j}'\cdot z_{\ell-1}(\theta))}_{g'_{\ell}(z_{\ell-1};\: \bm{\omega}'_{\ell},\bm{b}'_{\ell})}, \qquad \ell = 2,\dotsc, L,
\end{align*}
and where $\omega_{\ell j} \in \mathbb{R}^{d}$, $\omega_{\ell j}'\in\mathbb{R}$,  and $b_{\ell j},b_{\ell j}'\in\mathbb{C}$ are respectively frequency and amplitude parameters. For each block $\ell$, the two sets of frequency parameters
$$
\bm{\omega}_{\ell} := \{\omega_{\ell j}; \, j=1, \dotsc, W \}, \qquad  \bm{\omega}'_{\ell} := \{\omega_{\ell j}'; \, j=1, \dotsc, W \},
$$ 
are assumed to be independently and identically distributed (i.i.d.) random variables following two specific distributions $p_{\ell}(\omega): {\mathbb R}^d \rightarrow [0, \infty)$ and $q_{\ell}(\omega'): {\mathbb R} \rightarrow [0, \infty)$, respectively. 
That is,
\begin{equation}\label{eqn:random_freq}
\omega_{\ell j} \overset{\mathrm{iid}}{\sim}  p_{\ell}(\omega), 
\qquad
\omega_{\ell j}' \overset{\mathrm{iid}}{\sim}  q_{\ell}(\omega'), \qquad j=1, \dotsc, W,
\end{equation}
where $\omega$ and $\omega'$ are respectively arbitrary inputs to the functions $p_{\ell}$ and $q_{\ell}$. We can equivalently represent an rFNN by a sequence of frequency-amplitude tuples
$$
\Phi := \{ 
(\bm{\omega}_1,\bm{b}_1), \dotsc, (\bm{\omega}_{L},\bm{b}_{L}), (\bm{\omega}'_2,\bm{b}_2'),\dotsc,  (\bm{\omega}_{L}',\bm{b}_{L}') 
\},$$
$$\bm{\omega}_{\ell} \in {\mathbb R}^{Wd}, 
\quad
\bm{\omega}_{\ell}' \in {\mathbb R}^{W}, 
\quad
\bm{b}_{\ell}, \bm{b}_{\ell}' \in {\mathbb C}^{W}.
$$ 
In Figure~\ref{fig:FF_net_example}, we include an example diagram of an rFNN taking one-dimensional input with depth $3$ (or 3 blocks) and width $2$.
\begin{figure}[!htb]
    \centering
    \includegraphics[width = 0.75\textwidth]{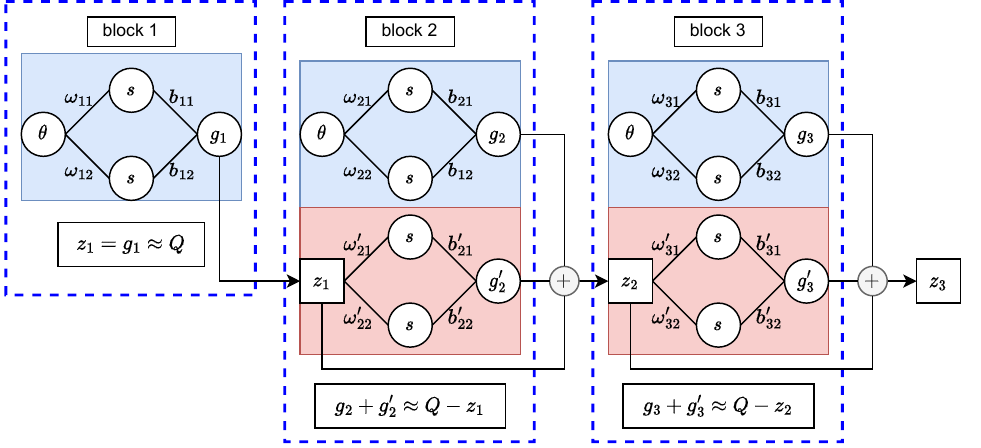}
    \caption{An rFNN with one input and $(W,L) = (2,3)$.}
    \label{fig:FF_net_example}
\end{figure}

An rFNN of depth $L$ proceeds as a series of $L$ blocks where each block learns a correction on the output of the previous block. Precisely, the first block, whose output is $z_{1}$, is a $W$ term Fourier sum approximation of the target function $Q$. Subsequently, at the $\ell$th block, with $\ell \ge 2$, the parameters $\bm{\omega}_{\ell}, \bm{\omega}_{\ell}', \bm{b}_{\ell}, \bm{b}_{\ell}'$ are tuned to approximate 
\begin{equation*}
    Q(\theta) - z_{\ell-1}(\theta)\approx g_{\ell}(\theta;\: \bm{\omega}_{\ell},\bm{b}_{\ell}) + g'_{\ell}(z_{\ell-1};\:\bm{\omega}'_{\ell},\bm{b}'_{\ell}), \qquad \ell \ge 2.
\end{equation*}
That is, in accordance with the form of the correction $Q-z_{\ell-1}$, we assume that it can be efficiently approximated by the sum of $g_{\ell}(\theta)$, which is just a function of $\theta$, and $g_{\ell}'(z_{\ell -1})$, which is just a function of the output of the previous block $z_{\ell - 1}$.  The output of block $\ell$ is then given by $z_{\ell} = z_{\ell-1} + g_{\ell} + g_{\ell}'$, where the output of the previous block $z_{\ell-1}$ is added to the approximation of the correction $g_{\ell}+g_{\ell}'$ through a skip-connection in the network architecture. Importantly, this makes the output of each block $\ell$ a true approximation of the target function; i.e. $z_{\ell}\approx Q$.

\subsection{Generalization error in rFNNs}

Assume that we are interested in using an rFNN $\Phi$ of depth $L$ and width $W$ to approximate a target function $Q\in S$ from a set of $N$ training data $\{(\theta^{(n)},Q(\theta^{(n)}))\}_{n=1}^{N}$, where $\{\theta^{(n)}\}_{n=1}^{N}$ are assumed to be i.i.d. samples from a (possibly unknown) distribution $\rho: \Theta  \rightarrow [0, \infty)$. 
Let ${\mathbb E}_{\theta}$ and $\mathbb{E}_{\bm{\omega},\bm{\omega}'}$ denote the expectation with respect to the input $\theta$ and the frequency parameters \{$\bm{\omega}_{\ell},\bm{\omega}_{\ell}'$; $\ell=1,\dotsc, L$\}, and denote by $\bm{b}$ and $\bm{b}'$ the collection of amplitude parameters $\{\bm{b_{\ell}}; \ell=1,\dotsc, L\}$ and $\{\bm{b}_{\ell}'; \ell=1,\dotsc, L\}$ respectively. We then define the generalization error $\varepsilon$ in the approximation of $Q$ by the rFNN as
\begin{equation}\label{eqn:generalization_error}
    \varepsilon := \mathbb{E}_{\bm{\omega},\bm{\omega}'}[\min_{\bm{b}, \bm{b}'}\mathbb{E}_{\theta}[|Q(\theta)-Q_{\Phi}(\theta;\bm{b},\bm{b}' \bm{\omega},\bm{\omega}')|^2]].
\end{equation}
The optimization problem \eqref{eqn:generalization_error} is an instance of the well known random Fourier features problem, which first appeared in \cite{rahimi2007random} and later in \cite{rahimi2008random,rudi2017generalization,weinan2020comparative,1layerFF_kammonen,deepFF_kammonen}. Moreover, it is to be noted that in practice the generalization error is often estimated by approximating the expectation ${\mathbb E}_{\theta}$ through sample averaging of the available $N < \infty$ data points. 

In \cite{deepFF_kammonen}, an upper bound for this generalization error was derived, a result that we restate using our notation in Theorem~\ref{thm:ff_main_result}.

\begin{theorem}\label{thm:ff_main_result} 
Let $Q$ be a target function in $S$, as defined in \eqref{target_function_space}, excluding the identically zero function.
Let $Q_{\Phi}$ be a random Fourier neural network \eqref{eqn:rFNN} with depth $L\geq 2$, width $W\geq 1$, and parameters $\{\bm{\omega}, \bm{\omega}', \bm{b}, \bm{b}'\}$. Then there exists positive constants $C'$ and $c$ such that 

\begin{equation}\label{eqn:FF_error_bound_true}
    \varepsilon \leq C'\frac{||Q||^{2}_{L^{\infty}(\mathbb{R}^d)}}{WL}\left(1 + \ln \frac{||\hat{Q}||_{L^{1}(\mathbb{R}^d)}}{||Q||_{L^{\infty}(\mathbb{R}^d)}} \right)^2 + \mathcal{O}\left(\frac{1}{W^2} + \frac{1}{L^4} + Le^{-cW}\right),
\end{equation}
where $\hat{Q}$ is the Fourier transform of $Q$. Furthermore, for sufficiently large $WL$, with $W =\mathcal{O}(L^2)$, there exists a positive constant $C$ such that the generalization error \eqref{eqn:generalization_error} satisfies
\begin{equation}\label{eqn:FF_error_bound}
    \varepsilon  \leq C\frac{||Q||^{2}_{L^{\infty}(\mathbb{R}^d)}}{WL}\left(1 + \ln \frac{||\hat{Q}||_{L^{1}(\mathbb{R}^d)}}{||Q||_{L^{\infty}(\mathbb{R}^d)}} \right)^2.
\end{equation}
\end{theorem}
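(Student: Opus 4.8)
The plan is to reduce the deep estimate to the shallow random Fourier features approximation bound and then exploit the residual (block-wise) architecture to accumulate approximation power across the $L$ blocks, so that $L$ blocks of width $W$ behave, up to lower-order terms, like a single shallow network with $WL$ neurons. Throughout I use that $Q$ is not identically zero, which keeps $\|Q\|_{L^\infty}$ and the logarithm well defined.

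First I would establish the single-block building block: for any $f\in S$, approximating $f$ by $\sum_{j=1}^{M}b_j s(\omega_j\cdot\theta)$ with frequencies drawn i.i.d. from a well-chosen density and amplitudes $b_j$ subsequently optimized obeys $\mathbb{E}_\omega\min_b\mathbb{E}_\theta[|f-f_M|^2]\le \frac{C\|f\|_{L^\infty}^2}{M}\big(1+\ln\frac{\|\hat f\|_{L^1}}{\|f\|_{L^\infty}}\big)^2$ up to lower-order terms. The mechanism is importance sampling from the density $p\propto|\hat f|$ restricted to a frequency ball of radius $R$, with the Monte-Carlo variance of the resulting unbiased estimator controlling the squared error; the logarithmic factor and the replacement of $\|\hat f\|_{L^1}^2$ by $\|f\|_{L^\infty}^2$ come from optimally balancing the truncation radius $R$ against the Fourier tail $\int_{|\omega|>R}|\hat f|$. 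This is the shallow result underlying \cite{deepFF_kammonen}, which I would cite or re-derive.

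Next comes the core of the depth gain. Define the residuals $r_\ell:=Q-z_\ell$ and the per-block error $\varepsilon_\ell:=\mathbb{E}_\omega\min_b\mathbb{E}_\theta[|r_\ell|^2]$. By construction block $\ell$ fits the residual $r_{\ell-1}$ with the correction $g_\ell+g'_\ell$, so applying the single-block bound to $r_{\ell-1}$ produces a recursion. The subtle point is that the per-block estimate carries the factor $\frac{\|r_{\ell-1}\|_{L^\infty}^2}{W}\big(1+\ln\frac{\|\hat r_{\ell-1}\|_{L^1}}{\|r_{\ell-1}\|_{L^\infty}}\big)^2$, in which the residual's sup-norm decreases with $\ell$ while its Fourier $L^1$-mass decreases much more slowly, so the logarithmic factor grows; the whole recursion must be balanced so that the $L$ blocks of width $W$ pool into the rate of a single width-$WL$ network, i.e. $\varepsilon_L\sim\frac{\|Q\|_{L^\infty}^2}{WL}(\cdots)$, rather than collapsing to an impossibly fast geometric rate. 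I would control this with an inductive hypothesis on both $\|r_\ell\|_{L^\infty}$ and $\|\hat r_\ell\|_{L^1}$. The coupling term $g'_\ell(z_{\ell-1})=\Re\sum_j b'_{\ell j}s(\omega'_{\ell j}\cdot z_{\ell-1})$ composes the Fourier activation with the previous block's scalar output, and I would verify that this composition stays in the class $S$ so that the single-block estimate applies to it.

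Finally I would collect the lower-order contributions. The $\frac{1}{W^2}$ term arises from second-order variance remainders in the Monte-Carlo expansion, the $\frac{1}{L^4}$ from the accumulated recursion error over $L$ blocks, and the $Le^{-cW}$ from a concentration argument (Bernstein/Hoeffding for the random-feature sums, or a matrix-concentration bound for the amplitude least-squares solve) combined with a union bound over the $L$ blocks, establishing \eqref{eqn:FF_error_bound_true}. For the clean rate \eqref{eqn:FF_error_bound} I would impose $W=\mathcal{O}(L^2)$: then $\frac{1}{W^2}=\mathcal{O}(L^{-4})$, $\frac{1}{L^4}$, and $Le^{-cW}$ are all $o\big(\frac{1}{WL}\big)=o\big(L^{-3}\big)$, so for sufficiently large $WL$ they are absorbed into the leading constant $C$. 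The main obstacle is the residual recursion of the third step: ensuring that the per-block improvement pools across depth into the honest $1/(WL)$ rate rather than over-counting geometrically, while simultaneously controlling the nonlinear composition $s(\omega'\cdot z_{\ell-1})$ propagated through the skip connections.
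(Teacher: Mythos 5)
The paper does not give a self-contained argument here: its proof of Theorem~\ref{thm:ff_main_result} is a one-line reduction to Theorem 2.1 and Remark 2.1 of \cite{deepFF_kammonen}, so the only meaningful comparison is between your outline and the argument in that reference. Your overall architecture --- a shallow random-Fourier-features bound obtained by importance sampling from a density proportional to $|\hat f|$, followed by a block-wise residual recursion, followed by absorption of the lower-order terms under $W=\mathcal{O}(L^2)$ --- has the right shape and matches the strategy of \cite{deepFF_kammonen}. The final absorption step is also correct as stated: under $W=\mathcal{O}(L^2)$ each of $W^{-2}$, $L^{-4}$ and $Le^{-cW}$ is $o(1/(WL))$, so \eqref{eqn:FF_error_bound} follows from \eqref{eqn:FF_error_bound_true}.

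However, the step you yourself flag as ``the main obstacle'' is the entire content of the theorem, and your proposal does not resolve it. Applying the shallow bound naively to each residual $r_{\ell-1}$ gives $\varepsilon_\ell\lesssim \|r_{\ell-1}\|_{L^\infty}^2/W$ times a logarithmic factor, and without more structure this neither telescopes to $1/(WL)$ nor avoids the two failure modes you name. The mechanism that actually produces the $1/(WL)$ rate is that the amplitude minimization at block $\ell$ dominates every convex combination of the previous output and a fresh $W$-term correction, which yields a harmonic-type recursion (roughly $1/\varepsilon_\ell\ge 1/\varepsilon_{\ell-1}+cW$ up to the logarithmic factor) rather than a geometric one; this greedy-approximation argument, which is where the skip connection and the near-identity role of $g'_\ell$ enter, is absent from your sketch. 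Two further gaps: the residual $r_{\ell-1}$ is a random function of all previously sampled frequencies, so the quantities $\|r_{\ell-1}\|_{L^\infty}$ and $\|\hat r_{\ell-1}\|_{L^1}$ in your proposed inductive hypothesis are random variables, and the induction must be run in expectation or on a high-probability event --- this conditioning is the actual source of the $Le^{-cW}$ term, not a generic union bound over blocks. And the composed activation $s(\omega'_{\ell j}z_{\ell-1})$ is not dispatched by ``verifying the composition stays in $S$'': the cited argument relies on the structural assumption (discussed in Section~\ref{sec:bbb_training_overview}) that $g'_\ell$ realizes an approximate scaled identity map, without which the residual recursion does not close. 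As written, the proposal is a plausible plan whose decisive steps are identified but not carried out.
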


\begin{proof}
The proof follows from a manipulation of Theorem 2.1 and Remark 2.1 in \cite{deepFF_kammonen}. 
\end{proof}
Interestingly, the result in Theorem~\ref{thm:ff_main_result} leads directly to the following corollary concerning the existence of particular (deterministic) Fourier neural networks with fixed frequency and amplitude parameters and with squared $L^2$ error satisfying the same estimates \eqref{eqn:FF_error_bound} and \eqref{eqn:FF_error_bound_true}.  

\begin{corollary}\label{cor:particular_FFnet}
Let $Q$ be a target function in $S$, as defined in \eqref{target_function_space}, and define
\begin{equation*}
    \varepsilon_{opt} := \min_{\bm{\omega},\bm{\omega}',\bm{b}, \bm{b}'}\{\mathbb{E}_{\theta}[|Q(\theta) -Q_{\Phi}(\theta;\bm{b},\bm{b}',\bm{\omega}, \bm{\omega}')|^2]\}.
\end{equation*}
There exists a Fourier neural network with fixed frequency and amplitude parameters, say $(\bm{\omega}^*,\bm{\omega}^{'*},\bm{b}^{*}, \bm{b}^{'*})$, that satisfies
\begin{equation}\label{eqn:deterministic_vs_random_true} 
    \varepsilon_{opt} \leq C'\frac{||Q||^{2}_{L^{\infty}(\mathbb{R}^d)}}{WL}\left(1 + \ln \frac{||\hat{Q}||_{L^{1}(\mathbb{R}^d)}}{||Q||_{L^{\infty}(\mathbb{R}^d)}} \right)^2 + \mathcal{O}\left(\frac{1}{W^2} + \frac{1}{L^4} + Le^{-cW}\right).
\end{equation}
Furthermore, for sufficiently large $WL$, with $W=\mathcal{O}(L^2)$, $\varepsilon_{opt}$ satisfies
\begin{equation}\label{eqn:deterministic_vs_random} 
    \varepsilon_{opt} \leq C\frac{||Q||^{2}_{L^{\infty}(\mathbb{R}^d)}}{WL}\left(1 + \ln \frac{||\hat{Q}||_{L^{1}(\mathbb{R}^d)}}{||Q||_{L^{\infty}(\mathbb{R}^d)}} \right)^2.
\end{equation}
\end{corollary}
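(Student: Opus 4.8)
The plan is to recognize that $\varepsilon_{opt}$ and the generalization error $\varepsilon$ in \eqref{eqn:generalization_error} differ only in how they treat the frequency parameters: $\varepsilon_{opt}$ \emph{minimizes} over $(\bm{\omega}, \bm{\omega}')$, whereas $\varepsilon$ \emph{averages} over them against the densities $p_\ell, q_\ell$. To make this transparent I would write
\[
F(\bm{\omega}, \bm{\omega}') := \min_{\bm{b}, \bm{b}'}\mathbb{E}_{\theta}[|Q(\theta)-Q_{\Phi}(\theta;\bm{b},\bm{b}', \bm{\omega},\bm{\omega}')|^2]
\]
for the smallest squared $L^2$ error attainable once the frequencies are fixed. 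Because the amplitudes $\bm{b}, \bm{b}'$ enter $Q_{\Phi}$ linearly, this inner problem is a complex linear least-squares problem whose minimum is attained for every choice of frequencies, so $F$ is a well-defined nonnegative function. With this notation, \eqref{eqn:generalization_error} reads $\varepsilon = \mathbb{E}_{\bm{\omega}, \bm{\omega}'}[F(\bm{\omega}, \bm{\omega}')]$, while the corollary's quantity is $\varepsilon_{opt} = \min_{\bm{\omega}, \bm{\omega}'} F(\bm{\omega}, \bm{\omega}')$.

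The key step is the elementary observation that a minimum never exceeds an average. Since the frequencies are drawn from probability densities,
\[
\varepsilon_{opt} = \min_{\bm{\omega}, \bm{\omega}'} F(\bm{\omega}, \bm{\omega}') \leq \mathbb{E}_{\bm{\omega}, \bm{\omega}'}[F(\bm{\omega}, \bm{\omega}')] = \varepsilon,
\]
where the inequality uses only that $F$ is bounded below by its minimum pointwise. Feeding the bounds \eqref{eqn:FF_error_bound_true} and \eqref{eqn:FF_error_bound} from Theorem~\ref{thm:ff_main_result} into the right-hand side immediately gives the claimed estimates \eqref{eqn:deterministic_vs_random_true} and \eqref{eqn:deterministic_vs_random}.

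To exhibit the concrete deterministic network $(\bm{\omega}^*,\bm{\omega}^{'*},\bm{b}^{*}, \bm{b}^{'*})$, rather than rely on attainment of the outer minimum I would use a first-moment argument. Because $F \geq 0$ and $\mathbb{E}_{\bm{\omega}, \bm{\omega}'}[F] = \varepsilon$, the event $\{F(\bm{\omega}, \bm{\omega}') \leq \varepsilon\}$ has positive probability; otherwise $F > \varepsilon$ almost surely would force $\mathbb{E}[F] > \varepsilon$. Selecting any frequency realization $(\bm{\omega}^*, \bm{\omega}^{'*})$ in this event, together with the corresponding least-squares-optimal amplitudes $(\bm{b}^*, \bm{b}^{'*})$, then yields a fixed-parameter Fourier neural network whose squared $L^2$ error is at most $\varepsilon$, and hence satisfies the same bounds.

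I do not expect a substantive obstacle here. The only delicate point is that $\varepsilon_{opt}$ is written as a minimum even though the frequency domain is noncompact, so the outer minimizer need not exist; the probabilistic selection in the last step sidesteps this by requiring only a single admissible realization rather than a global minimizer.
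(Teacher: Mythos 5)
Your proof is correct and follows essentially the same route as the paper: the core step in both is the observation that a minimum is bounded by the corresponding mean, i.e.\ $\min_{\bm{\omega},\bm{\omega}',\bm{b},\bm{b}'}\{\cdot\}\le\mathbb{E}_{\bm{\omega},\bm{\omega}'}[\min_{\bm{b},\bm{b}'}\{\cdot\}]$, followed by an application of Theorem~\ref{thm:ff_main_result}. Your first-moment selection of a frequency realization with $F(\bm{\omega},\bm{\omega}')\le\varepsilon$ is actually more careful than the paper, which simply asserts that minimizers of the left-hand side exist; the only detail you omit is the degenerate case $Q\equiv 0$, which is excluded from Theorem~\ref{thm:ff_main_result} and which the paper handles separately by noting it is represented exactly by the network with all amplitudes set to zero.
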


\begin{proof}
First, assume $Q$ is not identically zero. Utilizing the fact that a minimum is less than or equal to its corresponding mean, we calculate 
{\footnotesize
    \begin{equation}\label{eqn:corr_min_less_mean}
        \min_{\bm{\omega},\bm{\omega}',\bm{b}, \bm{b}'}\{\mathbb{E}_{\theta}[|Q(\theta) -Q_{\Phi}(\theta;\bm{b},\bm{b}',\bm{\omega}, \bm{\omega}')|^2]\}\leq \mathbb{E}_{\bm{\omega},\bm{\omega}'}[\min_{\bm{b},\bm{b}'}\mathbb{E}_{\theta}[|Q(\theta) -Q_{\Phi}(\theta;\bm{b},\bm{b}' \bm{\omega}, \bm{\omega}')|^2]].
    \end{equation}}\ignorespacesafterend
Letting $\bm{b}^*, \bm{b}^{'*}, \bm{\omega}^*,\bm{\omega}^{'*}$ be the minimizers of the left hand side of \eqref{eqn:corr_min_less_mean}, the desired estimates \eqref{eqn:deterministic_vs_random_true} and \eqref{eqn:deterministic_vs_random} follow from Theorem \ref{thm:ff_main_result}. If $Q\equiv 0$, then it can be represented exactly by an rFNN of any width $W\geq 1$ and depth $L\geq 2$ by taking all amplitude parameters equal to zero. Hence the desired estimate holds for this target function as well.
\end{proof}

The estimates \eqref{eqn:FF_error_bound} and \eqref{eqn:deterministic_vs_random} indicate expected linear convergence in the approximation error with respect to the product of network width and depth $(WL)$. Among other metrics, these will be useful in evaluating the performance of our proposed training algorithm.

\section{Training Algorithm Design}\label{sec:training_algorithm_design}

This section is dedicated to detailing our proposed training algorithm, accompanied by a simultaneous development of the theoretical groundwork that supports it. In Section~\ref{sec:bbb_training_overview}, we derive optimal frequency parameter distributions specific to each block of the network, and motivate how those optimal frequency distributions enable a block-by-block training approach. Subsequently, Section~\ref{sec:adaptive_MCMC} introduces an adaptive MCMC procedure, utilizing the optimal frequency distributions to sequentially train each block of the network. This section further provides details on practical implementation of the algorithm and offers insights into its requisite hyperparameters. 

\subsection{A block-by-block training approach}\label{sec:bbb_training_overview}

In the present work, rather than optimizing all network parameters simultaneously, we opt to train each block of the network in sequence. 
This is motivated by the unique structure of rFNNs, where the two different kinds of random frequency parameters at each block follow distinct distributions, denoted by $p_{\ell}(\omega)$ and $q_{\ell}(\omega')$ in \eqref{eqn:random_freq}.
To this end, we first derive analytic a priori optimal frequency distributions, denoted as $p_{\ell}^*$ and $q_{\ell}^*$, for each block. Subsequently, our training strategy involves sampling frequencies from the optimal distribution(s) and then solving a convex optimization problem for the corresponding amplitudes. 

In \cite{deepFF_kammonen}, a similar strategy is leveraged.  The Metropolis procedure from \cite{1layerFF_kammonen} is used to iteratively train each network block by approximately sampling $\bm{\omega}_{\ell}$ from the optimal distribution $p^{*}_{\ell}$. In contrast, the frequencies $\bm{\omega}_{\ell}'$ are sampled just once from a normal distribution and then remain unchanged for the remainder of training. This modeling choice is justified in \cite{deepFF_kammonen} based on the assumption that, under optimal conditions, the term $g_{\ell}'(z_{\ell-1};\bm{b}'_{\ell},\bm{\omega}'_{\ell})$ learns a scaled identity map. We hypothesize that the assumptions made in \cite{deepFF_kammonen} regarding $g'_{\ell}(z_{\ell-1};\bm{b}'_{\ell},\bm{\omega}'_{\ell})$ and $\bm{\omega}_{\ell}'$ may not be optimal. Specifically, rFNNs can utilize two types of basis functions: the term $g_{\ell}(\theta; \bm{b}_{\ell}, \bm{\omega}_{\ell})$ enables the use of standard Fourier modes, while $g'_{\ell}(z_{\ell-1}; \bm{b}_{\ell}',\bm{\omega}_{\ell}')$ incorporates basis functions that are compositions of Fourier modes. The assumption in \cite{deepFF_kammonen} that $g'_{\ell}(z_{\ell-1}; \bm{b}_{\ell}',\bm{\omega}_{\ell}') $ learns a scaled identity map suggests that compositional basis functions are only marginally useful, implying that rFNNs should almost exclusively rely on standard Fourier modes to approximate features of the target function. We believe this assumption restricts the potential of  rFNNs to exploit the compositional power of neural network depth, particularly in approximating functions that standard Fourier modes do not handle well, such as those with discontinuities.

Aiming to optimally sample both types of random frequencies, we derive new optimal distributions $p_{\ell}(\omega)$ and $  q_{\ell}^{*}(\omega')$ for each block. These distributions enable us to create a training algorithm that consistently meets or even surpasses the theoretically predicted approximation rate. In Section~\ref{sec:discont}, we compare the Metropolis algorithm from \cite{deepFF_kammonen} with our block-by-block training approach, providing support for our hypothesis.

In the following derivation, we assume that we are utilizing an rFNN of width $W$ and depth $L$ to approximate a target function $Q\in S$. The derivation of the optimal frequency distributions differs between block $\ell=1$ and block $\ell>1$, so we divide our exposition. 
\medskip

\noindent{\bf Block $\ell = 1$:} At block $1$, the derivation follows the theoretical work in \cite{1layerFF_kammonen}, which we include here for completeness. We begin by deriving the known upper bound on the block $1$ generalization error; see e.g., \cite{barron1993universal, jones1992simple},
\begin{equation}
    \mathbb{E}_{\bm{\omega}_{1}}[\min_{\bm{b}_{1}}\{\mathbb{E}_{\theta}[|Q(\theta) - g_{1}(\theta;\bm{\omega}_{1},\bm{b}_{1})|^2] + \lambda_{1}|\bm{b}_{1}|^2 \}]\leq \frac{1+\lambda_1}{W}\mathbb{E}_{\omega}\left[\frac{|\hat{Q}(\omega)|^2}{(2\pi)^dp_1^2(\omega)}\right],
\end{equation}
where $\lambda_1\geq 0$ is a Tikhonov regularization parameter. Then as shown in \cite{1layerFF_kammonen}, this upper bound is minimized by the optimal frequency distribution
\begin{equation}\label{optimal_dist_block1}
    p_{1}^*(\omega) = \frac{|\hat{Q}(\omega)|}{||\hat{Q}||_{L^{1}(\mathbb{R}^d)}}.
\end{equation} 
\medskip

\noindent{\bf Block $\ell>1$:} Recall that at any block $\ell>1$, the network parameters are tuned to approximate the residual function 
\begin{equation*}
    r_{\ell}(\theta,z_{\ell-1}) = Q(\theta) - z_{\ell-1}\approx g_{\ell}(\theta;\bm{\omega}_{\ell},\bm{b}_{\ell}) + g_{\ell}'(z_{\ell-1};\bm{\omega}_{\ell}', \bm{b}_{\ell}').
\end{equation*}
To derive the optimal frequency distributions $p_{\ell}^*(\omega)$ and $q_{\ell}^*(\omega')$ for this block, we follow a similar approach to that used for block $1$, with one additional key assumption: $g_{\ell}(\theta)\approx \bar{r}_{\ell}(\theta)$ and $g_{\ell}'(z_{\ell-1}) \approx \bar{r}'_{\ell}(z_{\ell-1})$ for some unknown $\bar{r}_{\ell}$ and $\bar{r}'_{\ell}$. Importantly, we do not necessarily expect that $\bar{r}_{\ell}(\theta)=Q(\theta)$ and $\bar{r}'_{\ell}(z_{\ell-1})=-z_{\ell-1}$.  This assumption is motivated by the idea that certain features of the target function $r_{\ell}$ are more efficiently represented with respect to the variable $\theta$ while others are more efficiently represented with respect to the variable $z_{\ell-1}$, and that an optimal split between $\bar{r}_{\ell}$ and $\bar{r}'_{\ell}$ needs to be learned by the network. Given this assumption, we show in Appendix \ref{appendix} the following upper bound on the block $\ell$ generalization error 
{\small
\begin{equation*}
\begin{gathered}
    \mathbb{E}_{\bm{\omega}_{\ell},\bm{\omega}_{\ell}'}[\min_{\bm{b}_{\ell},\bm{b}'_{\ell}}\{\mathbb{E}_{\theta}[|r_{\ell}(\theta, z_{\ell-1})-g_{\ell}(\theta)-g_{\ell}'(z_{\ell-1})|^2]+\lambda_{\ell}|\bm{b}_{\ell},\bm{b}_{\ell}'|^2 \}]\\
    \leq\frac{1+\lambda_{\ell}}{W}\left(\mathbb{E}_{\omega}\left[\frac{|\hat{\bar{r}}_{\ell}(\omega)|^2}{(2\pi)^dp_{\ell}^2(\omega)}\right] + \mathbb{E}_{\omega'}\left[\frac{|\hat{\bar{r}}'_{\ell}(\omega')|^2}{(2\pi)q_{\ell}^2(\omega')}\right]\right),
    \end{gathered}
\end{equation*}}\ignorespacesafterend
where $\lambda_{\ell}\geq 0$ is a Tikhonov regularization parameter. Then as shown in Theorem \ref{thm:minimizing_densities} of Appendix \ref{appendix} this upper bound in minimized for the following optimal distributions written in terms of the unknown functions $\bar{r}_{\ell}$ and $\bar{r}'_{\ell}$
\begin{equation}\label{optimal_distributions}
    p^{*}_{\ell}(\omega) = \frac{|\hat{\bar{r}}_{\ell}(\omega)|}{||\hat{\bar{r}}_{\ell}||_{L^{1}(\mathbb{R}^d)}},\qquad q^{*}_{\ell}(\omega') = \frac{|\hat{\bar{r}}'_{\ell}(\omega')|}{||\hat{\bar{r}}'_{\ell}||_{L^{1}(\mathbb{R})}}.
\end{equation}

Now, given these optimal frequency distributions at each block, we consider the practical problem of approximating the target function $Q\in S$ from $N <\infty$ training samples $\{(\theta^{(n)},Q(\theta^{(n)}))\}_{n=1}^{N}.$ We decompose the training into a series of supervised learning problems, one for each block, where we do explicit training data augmentation between blocks. At block $\ell=1$, the training data is given by $\{(\theta^{(n)}, Q(\theta^{(n)})\}_{n=1}^{N_{1}}$, where $\{\theta^{(n)}\}_{n=1}^{N_1}$ are $N_1\leq N$ i.i.d. samples from some (possibly unknown) distribution $\rho_{1}:\Theta\mapsto [0,\infty)$. Then at any block $\ell>1$, the training data is given by 
\begin{equation*}
    \{(\theta^{(n)}, z_{\ell-1}^{(n)}, r_{\ell}(\theta^{(n)},z_{\ell-1}^{(n)}))\}_{n=1}^{N_{\ell}},
\end{equation*}
where $\{ \theta^{(n)} \}_{n=1}^{N_{\ell}}$ are $N_{\ell}\leq N$ i.i.d. samples from some (possibly unknown) distribution $\rho_{\ell}:\Theta\mapsto [0,\infty)$, and 
$z_{\ell-1}^{(n)} = z_{\ell-1}(\theta^{(n)})$ and 
$r_{\ell}(\theta^{(n)},z_{\ell-1}^{(n)}) = Q(\theta^{(n)}) - z_{\ell-1}^{(n)}$.

For simplicity, in our numerical examples in Section \ref{sec:numerics}, we use the very same $N$ training sample inputs at each block, but this is not required. The developed algorithm is flexible and can be applied to sample inputs which are disjoint, overlapping, or drawn from distributions specific to each block. A cost-accuracy analysis with respect to these different potential training data configurations is an exciting future research direction.  

From here, training at block $\ell=1$ is accomplished by minimizing the empirical risk on the block $\ell=1$ training data
\begin{equation}\label{empirical_risk_block1}
\begin{gathered}
    \mathbb{E}_{\bm{\omega}_{1}}[\min_{\bm{b}_{1}}\{N_{1}^{-1}\sum_{n=1}^{N_{1}}|Q(\theta^{(n)}) - g_{1}(\theta^{(n)};\bm{\omega}_1,\bm{b}_1)|^2 + \lambda_{1}|\bm{b}_{1}|^2 \}],\\ \qquad \omega_{1 j} \overset{\mathrm{iid}}{\sim}  p^{*}_{1}(\omega), \qquad j=1,\dotsc, W,
    \end{gathered}
\end{equation}
where $\lambda_{1}\geq0$ is a Tikhonov regularization parameter, $|\bm{b}_1|$ is the Euclidean norm of $\bm{b_1}\in \mathbb{C}^{W}$, and the frequencies are distributed according to the block $\ell=1$ optimal distribution \eqref{optimal_dist_block1}. Subsequently, our solution strategy entails generating a set of $W$ independent frequency samples, say $\bm{\omega}_{1}$, from $p_{1}^{*}(\omega)$ and then solving the following convex (least squares) optimization problem with the generated sample $\bm{\omega}_{1}$ for the amplitudes $\bm{b}_1$,
\begin{equation}\label{amplitudes_block_1}
    \min_{\bm{b}_{1}}\{N_{1}^{-1}\sum_{n=1}^{N_{1}}|Q(\theta^{(n)}) - g_{1}(\theta^{(n)};\bm{\omega}_{1},\bm{b}_1)|^2 + \lambda_{1}|\bm{b}_{1}|^2 \}.
\end{equation}

At block $\ell>1$, we take a similar approach. Suppressing the arguments in $g_{\ell}(\theta,\bm{\omega}_{\ell},\bm{b}_{\ell})$ and $g'_{\ell}(z_{\ell-1};\bm{\omega}_{\ell}',\bm{b}_{\ell}')$, we aim to minimize the empirical risk on the block's training data
\begin{equation}\label{empirical_risk_block_ell}
\begin{gathered}
\mathbb{E}_{\bm{\omega}_{\ell},\bm{\omega}_{\ell}'}[\min_{\bm{b}_{\ell},\bm{b}_{\ell}'}\{N_{\ell}^{-1}\sum_{n=1}^{N_{\ell}}|r_{\ell}(\theta^{(n)},z_{\ell-1}^{(n)}) - g_{\ell}(\theta^{(n)})- g'_{\ell}(z_{\ell-1}^{(n)})|^2 + \lambda_{\ell}|\bm{b}_{\ell},\bm{b}_{\ell}'|^2\}],\\
    \omega_{\ell j} \overset{\mathrm{iid}}{\sim}  p^{*}_{\ell}(\omega), 
\qquad
\omega_{\ell j}' \overset{\mathrm{iid}}{\sim}  q^{*}_{\ell}(\omega'), \qquad j=1, \dotsc, W,
\end{gathered}
\end{equation}
where $\lambda_{\ell}\geq 0$ is a Tikhonov regularization parameter, $|\bm{b}_{\ell},\bm{b}_{\ell}'|$ is a joint Euclidean norm of $\bm{b}_{\ell}$ and $\bm{b}_{\ell}'$ on $\mathbb{C}^{2W}$, and the frequency parameters are distributed according to the optimal distributions \eqref{optimal_distributions}. We then use a similar solution strategy. We sample $W$ independent frequencies, say $\bm{\omega}_{\ell}$, from $p^*_{\ell}(\omega)$, $W$ independent frequencies, say $\bm{\omega}_{\ell}'$, from $q^*_{\ell}(\omega')$, and then solve the following convex (least squares) optimization problem for the amplitudes $\bm{b}_{\ell}$ and $\bm{b}_{\ell}'$,
\begin{equation}\label{amplitudes_block_ell}
    \min_{\bm{b}_{\ell},\bm{b}_{\ell}'}\{N_{\ell}^{-1}\sum_{n=1}^{N_{\ell}}|r_{\ell}(\theta^{(n)},z_{\ell-1}^{(n)}) - g_{\ell}(\theta^{(n)};\bm{\omega}_{\ell},\bm{b}_{\ell})- g'_{\ell}(z_{\ell-1}^{(n)};\bm{\omega}'_{\ell},
    \bm{b}'_{\ell})|^2 + \lambda_{\ell}|\bm{b}_{\ell},\bm{b}_{\ell}'|^2\}.
\end{equation}

\subsection{Training via adaptive Markov Chain Monte Carlo}\label{sec:adaptive_MCMC}

Leveraging the optimal frequency distributions \eqref{optimal_dist_block1} and \eqref{optimal_distributions}, this section proposes an MCMC based procedure to sequentially train each block of the network. At block $1$, we aim to solve the optimization problem~\eqref{empirical_risk_block1}, where the optimal frequency distribution $p_1^*$ \eqref{optimal_dist_block1} is known and depends on the Fourier transform of the target function. Similarly, at block $\ell>1$, our goal is to solve the optimization problem~\eqref{empirical_risk_block_ell}, where the optimal frequency distributions $q_{\ell}^{*},p_{\ell}^{*}$ \eqref{optimal_distributions} are established and depend on the Fourier transforms of $\bar{r}_{\ell}$ and $\bar{r}'_{\ell}$. 

At block $\ell = 1$, efficiently computing the Fourier transform of the target function is often challenging, and at block $\ell>1$, the functions $\bar{r}_{\ell}$ and $\bar{r}'_{\ell}$ are unknown. Hence strategies are required to approximately sample the optimal distributions and approximately determine $\bar{r}_{\ell}$ and $\bar{r}'_{\ell}$. To address this, we devise an adaptive MCMC sampling approach inspired by a similar algorithm for random Fourier feature regression \cite{1layerFF_kammonen}, which is what is achieved by block $1$ of a deep rFNN. 
Importantly, in training any block $\ell>1$, harnessing our newly introduced optimal distribution $q^*_{\ell}(\omega')$  and adaptively determining $\bar{r}_{\ell}$ and $\bar{r}'_{\ell}$ requires the development of a Metropolis within Gibbs procedure that is considerably different from the strategy employed in 
\cite{1layerFF_kammonen}. 

In this section, for clarity, we begin by detailing just one step of our proposed sampling procedure at both block $1$ and block $\ell>1$. Subsequently, the full block-by-block training procedure is presented as Algorithm~\ref{alg:block_by_block_training} later in this section. Given the structural disparities between block $1$ and block $\ell>1$, we again divide our exposition.
\medskip

\noindent{\bf Block $\ell=1$:} At block $1$, we aim to solve the optimization problem \eqref{empirical_risk_block1}. Given that block $1$ implements standard random Fourier features regression, we directly leverage the Metropolis algorithm in \cite{1layerFF_kammonen}, which we include here for completeness. 
\begin{enumerate}
    \item At the beginning of the Metropolis loop we have current frequencies \\$\omega_{11} ,\dotsc, \omega_{1 W}\in \mathbb{R}^d$ with corresponding amplitudes $b_{1 1},\dotsc, b_{1 W}\in \mathbb{C}$.
    \item Conduct update of $\bm{\omega}_{1}$ as follows:
    \begin{enumerate}
        \item Propose new frequencies $\bar{\omega}_{11},\dotsc \bar{\omega}_{1W}$ from a symmetric proposal distribution.
        \item Using the proposed frequencies $\bm{\bar{\omega}}_{1}$, solve the convex optimization problem \eqref{amplitudes_block_1} for the corresponding amplitudes $\bm{\bar{b}}_{1}$.
        \item For $j=1,\dotsc, W$, accept the frequencies $\bar{\omega}_{1j}$ with probability \\$\min\{1, |\bar{b}_{1 j}|^{\gamma}/|b_{1 j}|^{\gamma}\}$, where $\gamma>0$ is a Metropolis selection hyperparameter.
    \end{enumerate}
\end{enumerate}
The acceptance criterion $|\bar{b}_{1j}|^{\gamma}/|b_{1j}|^{\gamma}$ used in this Metropolis sampling algorithm was introduced in \cite{1layerFF_kammonen}. It can be motivated in an asymptotic sense as $W,\gamma\to \infty$. For clarity, we sketch this argument here. In \cite{1layerFF_kammonen}, it is shown that as $W\to \infty$, $|b_{1j}|\propto |\hat{Q}(\omega_{1j})|/p_{1}(\omega_{1j})$, and ideally, we want to sample from the optimal frequency distribution $p_{1}^{*}$, which satisfies the proportionality relationship $p_{1}^{*}(\omega_{1j})\propto |\hat{Q}(\omega_{1j})|.$ Accomplishing this goal directly is computationally prohibitive, so we relax this condition and instead aim to sample from an auxiliary distribution $p_{\gamma}(\omega)$ satisfying
\begin{equation}\label{eqn:proportionality}
    p_{\gamma}(\omega_{1j})\propto |\hat{Q}(\omega_{1j})|^{\gamma/\gamma+1},
\end{equation}
where importantly, as $\gamma\to \infty$, we recover the desired proportionality relationship $p_{\gamma}(\omega_{1j})\propto |\hat{Q}(\omega_{1j})|$.  Rearranging \eqref{eqn:proportionality}, we find $p_{\gamma}(\omega_{1j})\propto |\hat{Q}(\omega_{1j})|^{\gamma}/(p_{\gamma}(\omega_{1j}))^{\gamma}\propto |b_{1j}|^{\gamma}$.  Hence as $W,\gamma\to \infty$, the quantity $|b_{1j}|^{\gamma}$ becomes proportional to $|\hat{Q}(\omega_{1j})|$, which is proportional to the optimal distribution $p_1^*(\omega_{1j})$ \eqref{optimal_dist_block1}. This provides motivation that this acceptance criterion can work within the context of a Metropolis sampling framework. Perhaps more importantly, in \cite{1layerFF_kammonen}, this acceptance criterion was empirically demonstrated to be effective even when $W,\gamma\ll \infty$, and we find the same in our numerical examples in Section~\ref{sec:numerics}.

\medskip

\noindent{\bf Block $\ell >1$:} Here, our objective is to solve the nested optimization problem \eqref{empirical_risk_block_ell}. To achieve this, we devise a Metropolis within Gibbs procedure, where we perform alternating updates of the frequencies $\bm{\omega}_{\ell}$ and $\bm{\omega}'_{\ell}$. 
\begin{enumerate}
    \item At the beginning of the Gibbs loop we have current frequencies $\omega_{\ell 1} ,\dotsc \omega_{\ell  W}\in \mathbb{R}^d$ and $\omega_{\ell  1}',\dotsc, \omega_{\ell W}'\in \mathbb{R}$ with corresponding amplitudes $b_{\ell  1},\dotsc, b_{\ell  W}\in \mathbb{C}$ and $b'_{\ell 1},\dotsc, b'_{\ell  W}\in \mathbb{C}$
    \item Conduct $\bm{\omega}_{\ell}$ update as follows:

    \begin{enumerate}
        \item Propose new frequencies $\bar{\omega}_{\ell 1},\dotsc, \bar{\omega}_{\ell W}$ from a symmetric proposal distribution.

        \item Using the frequencies $\bar{\bm{\omega}}_{\ell}, \bm{\omega}_{\ell}$ compute corresponding amplitudes $\bar{\bm {b}}_{\ell},\bar{\bm{b}}'_{\ell}$ by solving the inner optimization problem in \eqref{amplitudes_block_ell}.

         \item For $j=1,\dotsc, W$, accept frequencies $\bar{\omega}_{\ell  j}$ with probability \\$\min\{1,|\bar{b}_{\ell j}|^{\gamma}/|b_{\ell,j}|^{\gamma}\}$,  where $\gamma>0$ is Metropolis selection hyperparameter.
    \end{enumerate}
    \item Conduct update of $\bm{\omega}_{\ell}'$ as follows:
    \begin{enumerate}
        \item  Propose new frequencies $\bar{\omega}'_{\ell  1},\dotsc, \bar{\omega}_{\ell   W}'$ from a symmetric proposal distribution.
        \item Using the frequencies $\bar{\bm{\omega}}_{\ell}',  \bm{\omega}_{\ell}$, compute corresponding amplitudes $\bar{\bm{b}}_{\ell}',\bar{\bm{b}}_{\ell}$ by solving the inner optimization problem in \eqref{amplitudes_block_ell}.
       \item For $j=1,\dotsc, W$, accept the frequencies $\bar{\omega}'_{\ell  j}$ with probability\\ $\min\{1,|\bar{b}'_{\ell j}|^{\gamma'}/|b'_{\ell j}|^{\gamma'}\}$, where $\gamma'>0$ is a Metropolis selection hyperparameter.
    \end{enumerate}
\end{enumerate}
Inspired by the effectiveness of the acceptance criterion in the block $\ell=1$ case, we employ similar criteria at block $\ell>1$; we use the ratio $|\bar{b}_{\ell j}|^{\gamma}/|b_{\ell j}|^{\gamma}$ for frequencies $\omega_{\ell j}$ and the ratio $|\bar{b}'_{\ell j}|^{\gamma'}/|b'_{\ell j}|^{\gamma'}$ for frequencies $\omega_{\ell j}'$. Crucially, however, the amplitudes in these acceptance criteria $\bm{b}_{\ell}$, $\bm{b}_{\ell}'$ are computed concurrently in steps 2(b) and 3(b). This simultaneous optimization introduces a form of competition between the two different types of frequencies, enabling us to adaptively determine which features of the target function are most effectively represented with respect to the variable $\theta$ (associated with frequencies $\bm{\omega}_{\ell}$) and which features are most effectively represented with respect to the variable $z_{\ell-1}$ (associated with frequencies $\bm{\omega}'_{\ell}$).

\begin{remark}\label{rem:greedy}
    This algorithm can alternatively be understood as taking a greedy-type approach, where the high probability behavior is to sample frequencies with the largest corresponding amplitudes.
\end{remark}

\noindent {\bf Connection to global optimization} The \emph{global optimization} problem we consider in this work is to minimize the generalization error over the entire network \eqref{eqn:generalization_error}, which amounts to solving the following optimization problem
\begin{equation}\label{eqn:gen_error_remark}
    \mathbb{E}_{\bm{\omega},\bm{\omega}'}[\min_{\bm{b},\bm{b}'}\mathbb{E}_{\theta}[|Q(\theta)-Q_{\Phi}(\theta;\bm{b},\bm{b}', \bm{\omega},\bm{\omega}')|^2]].
\end{equation}
Importantly, the block-by-block training algorithm does not attempt to solve \eqref{eqn:gen_error_remark}. Instead, it aims to sequentially solve
{\small
\begin{equation}
\begin{gathered}\label{eqn:block_ell_generalization_error_rem}
    \mathbb{E}_{\bm{\omega}_{1}}[\min_{\bm{b}_{1}}\{\mathbb{E}_{\theta}[|Q(\theta)-g_{1}(\theta)|^2 + \lambda_{1}|\bm{b}_{1}|^2 \}], \qquad \ell=1;\\
    \mathbb{E}_{\bm{\omega}_{\ell},\bm{\omega}_{\ell}'}[\min_{\bm{b}_{\ell},\bm{b}'_{\ell}}\{\mathbb{E}_{\theta}[|r_{\ell}(\theta, z_{\ell-1})-g_{\ell}(\theta)-g_{\ell}'(z_{\ell-1})|^2]+\lambda_{\ell}|\bm{b}_{\ell},\bm{b}_{\ell}'|^2 \}], \qquad \ell = 2,\dotsc, L,
\end{gathered}
\end{equation}}\ignorespacesafterend
where $r_{\ell}(\theta, z_{\ell-1})=Q(\theta) - z_{\ell-1}$, and $z_{\ell-1} = z_{\ell-1}(\theta; \bm{\omega}_{\ell-1},\bm{b}_{\ell-1})$. That is, the block-by-block algorithm works to minimize the generalization error at each block in sequence, where the target function for a given block is explicitly updated based on the prediction from the previous block. Notably, there is not an equivalence between solving the global optimization problem \eqref{eqn:gen_error_remark} and the sequence of optimization problems \eqref{eqn:block_ell_generalization_error_rem} local to each block. Because of this, the approximation rate $\mathcal{O}(1/WL)$ is not necessarily the expected approximation rate and the architectural constraint $W=\mathcal{O}(L^2)$ is not necessarily required when training with the block-by-block algorithm. Nevertheless, as the only existing approximation rate for rFNNs, it is a natural benchmark to asses the developed training algorithm.

Moreover, we expect the block-by-block algorithm to outperform the theoretical approximation rate for certain functions. This rate is based on sampling all frequency parameters of a random Fourier neural network from a probability density that minimizes an upper bound on the generalization error for the entire network \eqref{eqn:gen_error_remark}. The value of this probability density for a given frequency is roughly proportional to its amplitude in the target function’s Fourier series representation.

In contrast, when solving the sequence of optimization problems \eqref{eqn:block_ell_generalization_error_rem}, the frequencies for each block are sampled from an optimal density specific to that block, which is derived based on explicitly targeting the residual function $r_{\ell}$. A clear example of where this approach is more effective is with multiscale target functions that have frequencies with widely varying amplitudes. For such target functions, if the frequency parameters of an rFNN are sampled from the density that minimizes the upper bound on the global generalization error \eqref{eqn:gen_error_remark}, the dominant frequency is likely to be sampled many times before the smaller frequencies are considered, which is inefficient.

However, with the block-by-block algorithm, the dominant frequency is likely to be sampled in the first block, while subsequent blocks will target the smaller frequencies with high-probability. By keeping the width of these network blocks small, this method significantly reduces the inefficiencies related to repeatedly sampling large-scale frequencies. In Section~\ref{sec:multiscale}, we approximate such a multiscale target function and the results empirically support the preceeding discussion.
\medskip

\noindent{\bf Real-valued formulation.} In practice, we implement a real-valued version of the previously described algorithm. This requires the real-valued formulation of the convex optimization problems \eqref{amplitudes_block_1} and \eqref{amplitudes_block_ell} for the amplitudes, as well as clarification on the frequency acceptance criteria. We discuss each of these in turn.

Consider approximating a target function $Q\in S$ utilizing a random Fourier neural network $\Phi$. This network realizes the function
$Q_{\Phi}(\theta)=z_{L}(\theta)$, where the the recursive scheme resulting in $z_{L}$ can be written using only real variables as
\begin{align}
    z_{1}(\theta) &= g_1(\theta) = \sum_{j=1}^{W}\Re(b_{1 j})\cos(\omega_{1 j}\cdot \theta) - \Im(b_{1 j})\sin(\omega_{1 j}\cdot\theta),\\
    z_{\ell}(\theta) &= z_{\ell-1}(\theta) + g_{\ell}(\theta) + g'_{\ell}(z_{\ell-1}), \qquad \ell=2,\dotsc, L,
\end{align}
where we take the explicit real variable form of $g_{\ell}$ and $g_{\ell}'$ given by 
\begin{align}
    g_{\ell}(\theta;\bm{\omega}_{\ell},\bm{b}_{\ell}) &= \sum_{j=1}^{W}\Re(b_{\ell j})\cos(\omega_{\ell j}\cdot \theta) - \Im(b_{\ell j})\sin(\omega_{\ell j}\cdot\theta),\\
    g'_{\ell}(z_{\ell-1};\bm{\omega}_{\ell}',\bm{b}_{\ell})&= \sum_{j=1}^{W}\Re(b'_{\ell j})\cos(\omega'_{\ell j}z_{\ell-1}) - \Im(b'_{\ell j})\sin(\omega'_{\ell j}z_{\ell-1}).
\end{align}
Given this formulation, we can define real valued optimization problems for the amplitudes at each block. At block $\ell=1$, the amplitudes 
\begin{equation*}
    \Re(\bm{b}_1) = (\Re(b_{11}),\dotsc, \Re(b_{1W}))\in \mathbb{R}^{W},\qquad \Im(\bm{b}_{1}) = (\Im(b_{11}),\dotsc, \Im(b_{1W}))\in \mathbb{R}^W
\end{equation*}
are obtained by solving 
\begin{equation}\label{eqn:least_squares_block1}
    \min_{\Re(\bm{b}_{1}),\Im(\bm{b}_{1})}\bigg\{N_{1}^{-1}\left|A_{1}\begin{bmatrix}
        \Re(\bm{b}_{1})\\
        \Im(\bm{b}_{1})
    \end{bmatrix} - \bm{r}_{1}\right|^2 + \lambda_{1}|(\Re(\bm{b}_{1}),\Im(\bm{b}_{1})|^2\bigg\},
\end{equation}
where $\bm{r}_{1} = (Q(\theta^{(1)}),\dotsc,Q(\theta^{(N_{1})}))^{\top}\in \mathbb{R}^{N_{1}}$, and 
$A_{1}\in\mathbb{R}^{N_{1}\times2W}$ has the following structure,
\begin{equation}\label{eqn: coeff_matrix_A1}
\begin{gathered}
    A_1 = \begin{bmatrix}
        [\cos(\omega_{1 j}\cdot \theta^{(n)})] & [-\sin(\omega_{1 j}\cdot \theta^{(n)})] 
    \end{bmatrix},\\
    n=1, \dotsc, N_{1}, \qquad j=1, \dotsc W.
    \end{gathered}
\end{equation}
Here, each row of $A_1$ corresponds to a different data input sample $\theta^{(n)}$ and each column corresponds to a different frequency $\omega_{1 j}$ considering both cosine and sine contributions. 
Subsequently, at any block $\ell>1$, the amplitudes $\Re(\bm{b}_{\ell}),\Im(\bm{b}_{\ell}),\Re(\bm{b}_{\ell}'),\Im(\bm{b}_{\ell}')\in \mathbb{R}^{W}$ are determined by solving
\begin{equation}
\label{eqn:least_squares_block_ell}   \min_{\substack{\Re(\bm{b}_{\ell}),\Im(\bm{b}_{\ell}),\\\Re(\bm{b}_{\ell}'),\Im(\bm{b}_{\ell}')}}\bigg\{N_{\ell}^{-1}\left|A_{\ell}\begin{bmatrix}
        \Re(\bm{b}_{\ell})\\
        \Im(\bm{b}_{\ell})\\
        \Re(\bm{b}_{\ell}')\\
        \Im(\bm{b}_{\ell}')
    \end{bmatrix} - \bm{r}_{\ell}\right|^2 + \lambda_{\ell}|(\Re(\bm{b}_{\ell}),\Im(\bm{b}_{\ell}),\Re(\bm{b}_{\ell}'),\Im(\bm{b}_{\ell}'))|^2\bigg\},
\end{equation}
where $\bm{r}_{\ell} = (Q(\theta^{(1)})-z_{\ell-1}(\theta^{(1)}),\dotsc,Q(\theta^{(N_{\ell})})-z_{\ell-1}(\theta^{(N_{\ell})}))^{\top}\in \mathbb{R}^{N_{\ell}}$, and $A_{\ell}\in\mathbb{R}^{N_{\ell}\times4W}$ has the following structure,
\begin{equation}\label{eqn: coeff_matrix}
\begin{gathered}
    A_{\ell} = \begin{bmatrix}
        [\cos(\omega_{\ell j}\cdot \theta^{(n)})] & [-\sin(\omega_{\ell j}\cdot \theta^{(n)})] & [\cos(\omega_{\ell j}'z_{\ell-1}^{(n)})] &  [-\sin(\omega_{\ell j}'z_{\ell-1}^{(n)})]
    \end{bmatrix},\\
    n=1, \dotsc, N_{\ell}, \qquad j=1, \dotsc W, \qquad \ell > 1.
    \end{gathered}
\end{equation}
Both \eqref{eqn:least_squares_block1} and \eqref{eqn:least_squares_block_ell}, being quadratic with respect to the amplitude parameters, are convex optimization problems that can be solved by several different strategies depending on the ratios $N_{1}/2W$ at block $\ell=1$ and $N_{\ell}/4W$ at block $\ell > 1$. When $N_{1}\sim 2W$ ($N_{\ell}\sim 4W$), we can leverage singular value or QR decomposition \cite{trefethen2022numerical}, and when $N_{1}\gg 2W$ ($N_{\ell}\gg 4W)$, gradient-based methods can be employed \cite{shalev2014understanding}. 

With this real variable formulation, the frequency acceptance criteria at each block $\ell=1,\dotsc, L$ in our random sampling procedures are computed using the real and imaginary components of the amplitudes, utilizing the relations: 
\begin{equation*}
    |b_{\ell j}| = \sqrt{\Re(b_{\ell j})^2 + \Im(b_{\ell j})^2}, \qquad |b_{\ell j}'| = \sqrt{\Re(b_{\ell j}')^2 + \Im(b_{\ell j}')^2}, \qquad j=1, \dotsc, W.
\end{equation*}
We further note that in the present work we consider only real valued target functions, which can always be represented using only non-negative frequencies. Hence all negative valued frequencies are rejected during training.

Given this real-valued formulation, our full block-by-block training strategy is presented in Algorithm~\ref{alg:block_by_block_training}.
\begin{algorithm}[!htb]
    \caption{Block-by-block Training}\label{alg:block_by_block_training}
    \begin{algorithmic}[1]
    \State{{\bf Input:} training data: $\{(\theta^{(n)},Q(\theta^{(n)})\}_{n=1}^{N}$}
    \State{{\bf Output:} trained parameters: $\{\Re(\bm{b}_{\ell}),\Im(\bm{b}_{\ell}), \Re(
    \bm{b}'_{\ell}),\Im(\bm{b}'_{\ell}),\bm{\omega}_{\ell},\bm{\omega}_{\ell}'\}_{\ell=1}^{L}$}
    \State{{\bf Choose:}}
    \State{$W:= \text{network width}$}
    \State{$L:=\text{network depth}$}
    \State{$M:=\text{number of Metropolis iterations at each block}$}
    \State{$\gamma,\gamma':=\text{acceptance criteria exponents}$}
    \State{$\delta,\delta':=\text{Gaussian proposal variances}$}
    \State{$(\lambda_{1},\dotsc, \lambda_{L}) :=\text{Tikhonov regularization parameters associated with \eqref{eqn:least_squares_block1} and \eqref{eqn:least_squares_block_ell}}$}
    \algstore{myalg}
    \end{algorithmic}
    \end{algorithm}
    \begin{algorithm}
    \vspace{0.2cm}

    \begin{algorithmic}[1]
    \algrestore{myalg}
    \State{{\bf Train block 1:}}
    \State{$\bm{r} \leftarrow (Q(\theta^{(1)}),\dotsc,Q(\theta^{(n)}))$}
    \State{$\omega_{1j}\sim \mathcal{N}(0,I_d)$, \qquad $j=1,\dotsc, W$ \Comment{Initialize frequencies $\bm{\omega}_{1}$}}
    
    \State{$\Re(\bm{b}_1),\Im(\bm{b}_1) \leftarrow \text{minimizer of \eqref{eqn:least_squares_block1} given }\bm{\omega}_1, \bm{r}$}
    \For{$i=1,\dotsc, M$} \Comment{Begin Metropolis loop}
    \State{$\bar{\omega}_{1j}\sim \mathcal{N}(\omega_{1j},\delta I_{d})$,\qquad $j=1,\dotsc,W$} \Comment{Propose new frequencies $\bar{\bm{\omega}}_{1}$}
    \State{$\Re(\bar{\bm{b}}_{1}),\Im(\bar{\bm{b}}_{1})\leftarrow 
    \text{minimizer of \eqref{eqn:least_squares_block1} given } \bar{\bm{\omega}}_{1}, \bm{r}$}
    \For{$j=1,\dotsc, W$} \Comment{Begin $\bm{\omega}_1$ update loop}
    \If{$\left(\sqrt{\Re(\bar{b}_{1j})^2+\Im(\bar{b}_{1j})^2}/\sqrt{\Re(b_{1j})^2+\Im(b_{1j})^2}\right)^{\gamma}>\mathcal{U}(0,1)$}
    \State{$\omega_{1j}\leftarrow \bar{\omega}_{1j}, \quad \Re(b_{1j}),\Im(b_{1j}) \leftarrow \Re(\bar{b}_{1j}),\Im(\bar{b}_{1j})$}
    \EndIf
    \EndFor
    \State{$\Re(\bm{b}_1),\Im(\bm{b}_1)\leftarrow \text{minimizer of \eqref{eqn:least_squares_block1} given } \bm{\omega}_1, \bm{r}$}
    \EndFor
    \State{$\bm{z} = (z^{(1)},\dotsc, z^{(N)})\leftarrow (g_1(\theta^{(1)};\bm{\omega}_1,\Re(\bm{b}_1),\Im(\bm{b}_1)),\dotsc,g_1(\theta^{(N)};\omega_1,\Re(\bm{b}_1),\Im(\bm{b}_1))) $}
    \algstore{myalg}
    \end{algorithmic}
    \end{algorithm}
    \begin{algorithm}[!htb]

    \begin{algorithmic}[1]
    \algrestore{myalg}
    \State{{\bf Train blocks 2 through L:}}
    \For{$\ell = 2, \dotsc, L$:}
    \State{$\bm{r} \leftarrow \bm{r}-\bm{z}$} \Comment{Initialize frequencies $\bm{\omega}_{\ell}$}
    \State{$\omega_{\ell j}\sim\mathcal{N}(0,I_d)$, \qquad $j=1,\dotsc, W$}\Comment{Initialize frequencies $\bm{\omega}'_{\ell}$}
    \State{$\omega_{\ell j}'\sim\mathcal{N}(0,I_1)$, \qquad $j=1,\dotsc, W$}
    \State{$\Re(\bm{b}_{\ell}),\Im(\bm{b}_{\ell}),\Re(\bm{b}_{\ell}'),\Im(\bm{b}_{\ell}')\leftarrow \text{minimizer of \eqref{eqn:least_squares_block_ell} given } \bm{\omega}_{\ell},\bm{\omega}_{\ell}', \bm{r}$}
    \For{$i=1,\dotsc, M$:} \Comment{Begin Metropolis within Gibbs loop}
    \State{$\bar{\omega}_{\ell j}\sim \mathcal{N}(\omega_{\ell j},\delta I_d)$\qquad $j=1,\dotsc, W$} \Comment{Propose new frequencies $\bar{\bm{\omega}}_{\ell}$}
    \State{$\Re(\bar{\bm{b}}_{\ell}),\Im(\bar{\bm{b}}_{\ell}),\Re(\bar{\bm{b}}'_{\ell}),\Im(\bar{\bm{b}}'_{\ell})\leftarrow 
    \text{minimizer of \eqref{eqn:least_squares_block_ell} given } \bar{\bm{\omega}}_{\ell},\bm{\omega}_{\ell}', \bm{r}$}
    \For{$j=1,\dotsc, W$} \Comment{Begin 
    $\bm{\omega}_{\ell}$ update loop}
    \If{$\left(\sqrt{\Re(\bar{b}_{\ell j})^2+\Im(\bar{b}_{\ell j})^2}/\sqrt{\Re(b_{\ell j})^2+\Im(b_{\ell,j})^2}\right)^{\gamma}>\mathcal{U}(0,1)$}
    \State{$\omega_{\ell j}\leftarrow \bar{\omega}_{\ell j}, \quad \Re(b_{\ell j}),\Im(b_{\ell j}) \leftarrow \Re(\bar{b}_{\ell j}),\Im(\bar{b}_{\ell j})$}
    \EndIf
    \EndFor

    \State{$\bar{\omega}'_{\ell j}\sim \mathcal{N}(\omega'_{\ell j},\delta')$,\qquad $j=1,\dotsc W$}\Comment{Propose new frequencies $\bar{\bm{\omega}}'_{\ell}$}
    \State{$\Re(\bar{\bm{b}}_{\ell}),\Im(\bar{\bm{b}}_{\ell}),\Re(\bar{\bm{b}}'_{\ell}),\Im(\bar{\bm{b}}'_{\ell})\leftarrow 
    \text{minimizer of \eqref{eqn:least_squares_block_ell} given } \bm{\omega}_{\ell},\bar{\bm{\omega}}'_{\ell}, \bm{r}$}
    \For{$j=1,\dotsc, W$} \Comment{Begin $\bm{\omega}'_{\ell}$ update loop}
    \If{$\left(\sqrt{\Re(\bar{b}'_{\ell j})^2+\Im(\bar{b}'_{\ell j})^2}/\sqrt{\Re(b'_{\ell j})^2+\Im(b'_{\ell j})^2}\right)^{\gamma'}>\mathcal{U}(0,1)$}
    \State{$\omega'_{\ell j}\leftarrow \bar{\omega}'_{\ell j}, \quad \Re(b'_{\ell j}),\Im(b'_{\ell j}) \leftarrow \Re(\bar{b}'_{\ell j}),\Im(\bar{b}'_{\ell j})$}
    \EndIf
    \EndFor
    \State{$\Re(\bm{b}_{\ell}),\Im(\bm{b}_{\ell}),\Re(\bm{b}_{\ell}'),\Im(\bm{b}_{\ell}')\leftarrow \text{minimizer of \eqref{eqn:least_squares_block_ell} given } \bm{\omega}_{\ell},\bm{\omega}_{\ell}', \bm{r}$}
    \EndFor
    \State{$\bm{g}\leftarrow (g_{\ell}(\theta^{(1)};\bm{\omega}_{\ell};\Re(\bm{b}_{\ell}),\Im(\bm{b}_{\ell})),\dotsc, g_{\ell}(\theta^{(N)};\bm{\omega}_{\ell};\Re(\bm{b}_{\ell}),\Im(\bm{b}_{\ell})))$}
    \State{$\bm{g}'\leftarrow (g_{\ell}'(z^{(1)};\bm{\omega}_{\ell}';\Re(\bm{b}_{\ell}'),\Im(\bm{b}_{\ell}')),\dotsc, g_{\ell}'(z^{(N)};\bm{\omega}_{\ell}';\Re(\bm{b}_{\ell}'),\Im(\bm{b}_{\ell}')))$}
    \State{$\bm{z}\leftarrow \bm{z} + \bm{g} + \bm{g}'$}
    \EndFor
    \State{}
    \State{{\bf Return:} $\{\Re(\bm{b}_{\ell}),\Im(\bm{b}_{\ell}), \Re(
    \bm{b}'_{\ell}),\Im(\bm{b}'_{\ell}),\bm{\omega}_{\ell},\bm{\omega}_{\ell}'\}_{\ell=1}^{L}$}
    \end{algorithmic}
\end{algorithm}

\noindent {\bf Hyperparameter selection.}
We conclude this section by discussing the network architecture and hyperparameter choices present in the block-by-block training algorithm and outlined in Table~\ref{tab:hyperparameter_descriptions}.
\begin{table}[!htb]
    \centering
    \begin{tabular}{|c|p{10cm}|}
    \hline
         $W$ & Network width \\
         \hline
         $L$ & Network depth \\
         \hline
         $M$ & Number of Metropolis iterations (block $1$) and Metropolis within Gibbs iterations (block $\ell>1$) \\
         \hline
         $\gamma$, $\gamma'$ & Acceptance criteria exponents respectively associated with  frequencies $\omega_{\ell j}$, $\omega'_{\ell j}$ for all $j=1,\dotsc, W$, $\ell=1,\dotsc, L$ \\
         \hline
         $\delta, \delta'$ & Variance of Gaussian proposal distribution respectively associated with sampling frequency $\omega_{\ell,j}, \omega_{\ell,j}'$ for all $j=1,\dotsc,W$, $\ell=1,\dotsc, L$\\
         \hline
         $\lambda_{1},\dotsc, \lambda_{L}$ & Tikhonov regularization parameters at each block $\ell=1,\dotsc, L$\\ 
         \hline 
    \end{tabular}
    \caption{Block-by-block training  hyperparameters}
    \label{tab:hyperparameter_descriptions}
\end{table}

Regarding the choice of network architecture, we note that there are several reasons to keep network width small and increase network complexity primarily through depth. By maintaining a small $W$, we achieve a computationally efficient least squares problem for the amplitudes and potentially enhance our ability to train on sparser data. Indeed, for small $N_{\ell}$ ($\ell=1,\dotsc, L$), an even smaller $W$ is required to retain an overdetermined problem for the amplitudes at each block.

Furthermore, at each block, the width $W$ represents the number of Markov chains simultaneously attempting to sample the same optimal distribution. Hence periodically during training, different chains may sample similar frequencies simultaneously, leading to near-linear dependence in the least squares problems \eqref{eqn:least_squares_block1} and \eqref{eqn:least_squares_block_ell}. This concern is primarily handled through Tikhonov regularization, but if very small tolerances are desired, keeping width small is an effective and easy strategy to avoid ill conditioned least squares problems associated with this near linear dependence. We defer exploration into further methods to address this near linear dependence as future research.

It is additionally noteworthy that the existing theoretical approximation rate for random Fourier neural networks imposes a theoretical architecture constraint of $W=\mathcal{O}(L^2)$; see Theorem~\ref{thm:ff_main_result}. However, in practice, we have found that this constraint is not necessary. In fact, we observe the theoretical approximation rate in all of our numerical examples even when $W<L$. This empirical observation strengthens the case for block-by-block training, which inherently includes error control with respect to network depth. Since an approximation of the target function is obtained after each block, there is no need to choose $L$ before training. Blocks can be added incrementally until a desired error tolerance is achieved.

We denote by $M$ the number of Metropolis iterations at block $\ell=1$ and the number of Metropolis within Gibbs iterations at block $\ell>1$. This value can be predetermined and fixed, or it can be chosen adaptively using one of the many available MCMC stopping criteria; see e.g. \cite{roy2020convergence}. Therefore, it is not necessary for it to remain consistent across all blocks, and it should not be perceived as an inflexible pre-training hyperparameter.

The parameters $\delta$ and $\delta'$ represent the variance in the Gaussian proposal distributions used in our Metropolis (block $\ell = 1$) and Metropolis within Gibbs (block $\ell>1$) procedures to propose frequencies $\omega_{\ell,j}$ and $\omega_{\ell j}'$ respectively. We clarify here that given a current frequency parameter, say $\omega_{\ell j}\in \mathbb{R}^d$, we propose a new frequency $\bar{\omega}_{\ell j}$ sampled from a (multivariate) normal distribution, e.g.,   
$\bar{\omega}_{\ell j}\sim \mathcal{N}(\omega_{\ell j}, \delta I_d)$. 
Proposals concerning $\omega_{\ell j}'$ are accomplished in the same way with variance $\delta'$, but we note that $\omega_{\ell j}'$ is 1-dimensional for all $\ell=2,\dotsc, L$, $j=1,\dotsc, W$. In the present work, we exhibit fully satisfactory results on an array of target functions, but we remark that the use of the diagonal covariance matrix $\delta I_{d}$ for multidimensional frequencies is certainly not optimal and will be improved in future iterations of the algorithm. We further draw the reader's attention to the work \cite{roberts2001optimal}, where it was shown that the optimal variance in a general random walk Metropolis proposal distribution is given by $ 2.4^2/d$, where $d$ is the dimension of the target distribution. Certainly, for a particular Metropolis sampling procedure, more optimal procedures could be leveraged to optimally tune the hyperparameters $\delta,\delta'$; see, for example, \cite{haario2001adaptive}, which considers adaptive updating of the proposal variance. However, the choice $\delta,\delta' = 2.4^2/d$ represents a good initial guess, and in the present work, this choice of proposal variance yielded satisfactory results and desirable MCMC acceptance rates of about 30\% during training. Future iterations of the algorithm will consider adaptive updating of the proposal distributions; see e.g. \cite{haario2001adaptive}.

The hyperparameters $\lambda_{1},\dotsc, \lambda_{L}$ are utilized to regularize the least squares problems \eqref{eqn:least_squares_block1} and \eqref{eqn:least_squares_block_ell} throughout training. This serves to prevent ill conditioning resulting from linear dependence and also aids in mitigating overfitting. The selection of Tikhonov regularization parameter values largely depends on various factors such as the desired tolerance, number of training data, noise in the training data, network architecture, etc. A thorough investigation concerning how to optimally handle regularization for our proposed algorithm is deferred as a future research direction.

\section{Numerical Examples}\label{sec:numerics}

In this section, we approximate several target functions of varying regularity and dimension using rFNNs trained with our developed block-by-block algorithm. The $N$ training inputs $\{\theta^{(n)}\}_{n=1}^{N}$ are sampled from a random uniform distribution over the domain, and prior to training, the data are normalized with respect to a standard normal distribution. 
We use the very same $N$ training sample inputs in every block. In the notation of Section \ref{sec:bbb_training_overview}, this can be expressed as $N_{1} = \cdots = N_{L} = N$. Furthermore, for all numerical examples, we consider a network architecture and training data allotment such that the data-to-feature ratios are sufficiently large as to guarantee that the least squares problems \eqref{eqn:least_squares_block1} and \eqref{eqn:least_squares_block_ell} are well conditioned. 
We implement our algorithm in \verb!Julia! \cite{Julia-2017} and use the backslash operator \verb!\! to solve our regularized least squares problems \eqref{eqn:least_squares_block1} and \eqref{eqn:least_squares_block_ell} during training. For overdetermined systems, such as those considered in this work, this amounts to solving via QR decomposition, a method known to be highly numerically stable \cite{trefethen2022numerical}.

We evaluate the results both qualitatively and quantitatively using mean squared error
\begin{equation}\label{MSE}
    \varepsilon_{MSE} = \frac{1}{N_{test}}\sum_{n=1}^{N_{test}}|Q(\theta^{(n)})-Q_{\Phi}(\theta^{(n)})|^2,
\end{equation}
where $N_{test}$ is the number of samples in our test set. Our test set is always uniformly distributed over the prescribed domain. It should be noted that as $N_{test}\to \infty$, $\varepsilon_{MSE}\to \mathbb{E}_{\theta}[|Q(\theta)-Q_{\Phi}(\theta)|^2]$. Thus by Corollary~\ref{cor:particular_FFnet}, for sufficiently large $N_{test}$, $\varepsilon_{MSE}$ should exhibit the same approximation rate as the generalization error \eqref{eqn:generalization_error} in terms of network complexity. For clarity, we include experimental design choices, such as network settings, training data allocations, hyperparameter selections, etc., associated with each of our numerical experiments in Appendix \ref{app:experimental_design}.

\subsection{A multiscale target function}\label{sec:multiscale}

Consider the target function
\begin{equation*}
Q(\theta) = \cos(4 \, \theta) + 0.3\cos(70 \, \theta) + 0.05\cos(150 \, \theta),\qquad \theta\in [-1,1].
\end{equation*}
This function is relevant because it includes very high and very low frequencies of very different amplitudes. If fact, both the frequencies and amplitudes differ by more than an order of magnitude, and as such this function represents a very challenging learning task for a neural network suffering from spectral bias. In Figure~\ref{fig:multiscale_learning_progress}, we exhibit, from left to right, the training progress of an rFNN of width $W=6$ over the first three blocks. On the top row, we plot the network predictions, and on the bottom row, we provide corresponding histograms of the accepted frequencies at each block after a burn-in period of $2000$ iterations.

There are several important takeaways from these results. First, since the blocks are trained sequentially, and since we obtain a true approximation of the target function after each block, there is no need to choose the network depth ahead of time. We obtain updated error estimates after every block, and training can be terminated when a desired error tolerance is reached. Second, although this function is multiscale and contains high frequency features, we are able to learn all three frequencies with very low network complexity. Indeed, we have a concrete notion of the frequencies present in the target function by block 1. Moreover, the learned frequencies are the true frequencies of the target function. 
\begin{figure}[!htb]
    \centering
    \includegraphics[width=0.33\textwidth]{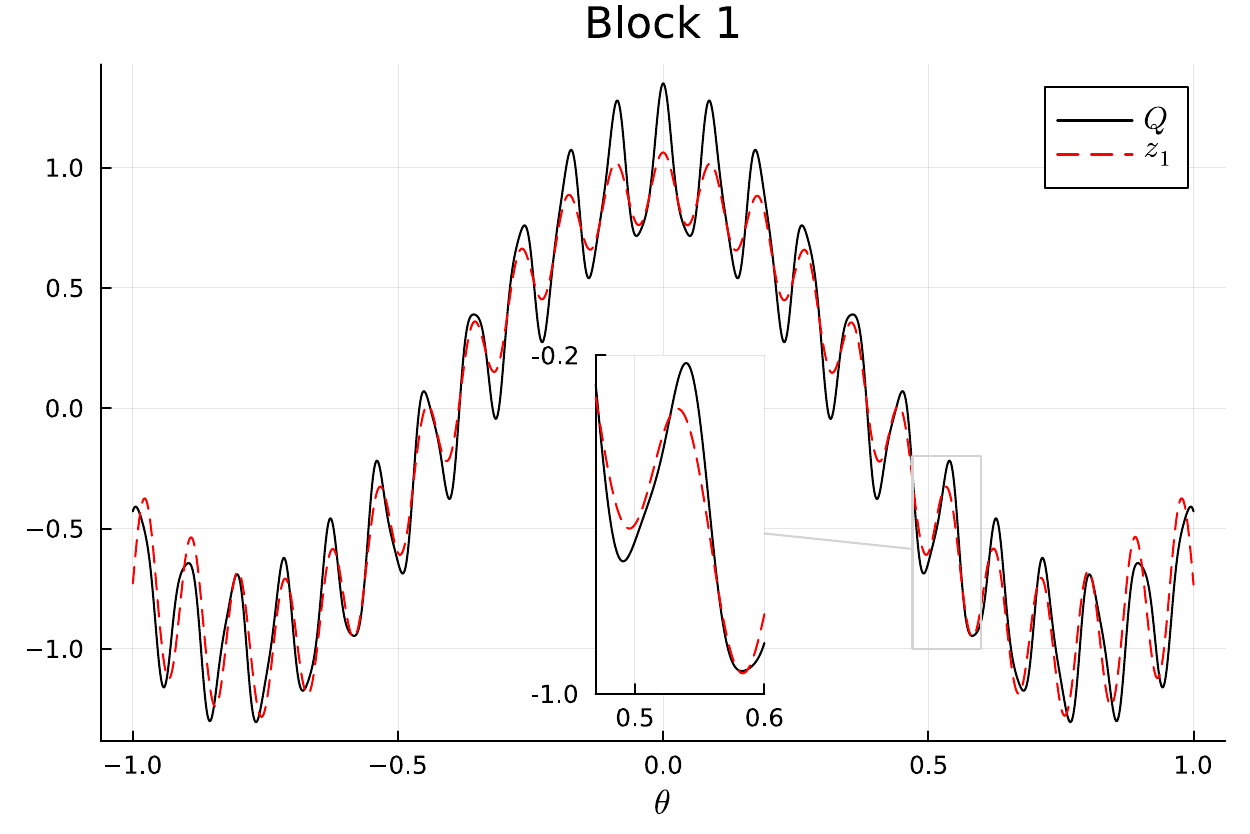}\includegraphics[width=0.33\textwidth]{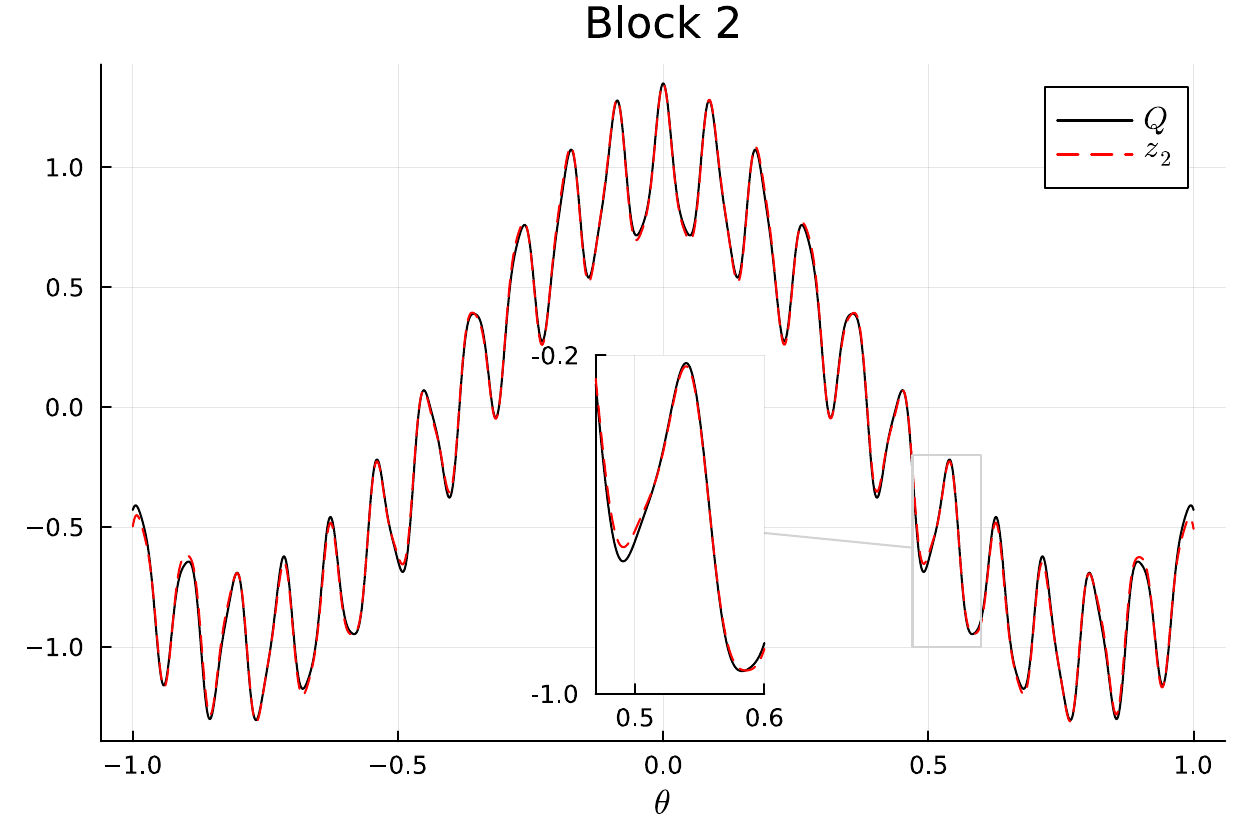}\includegraphics[width=0.33\textwidth]{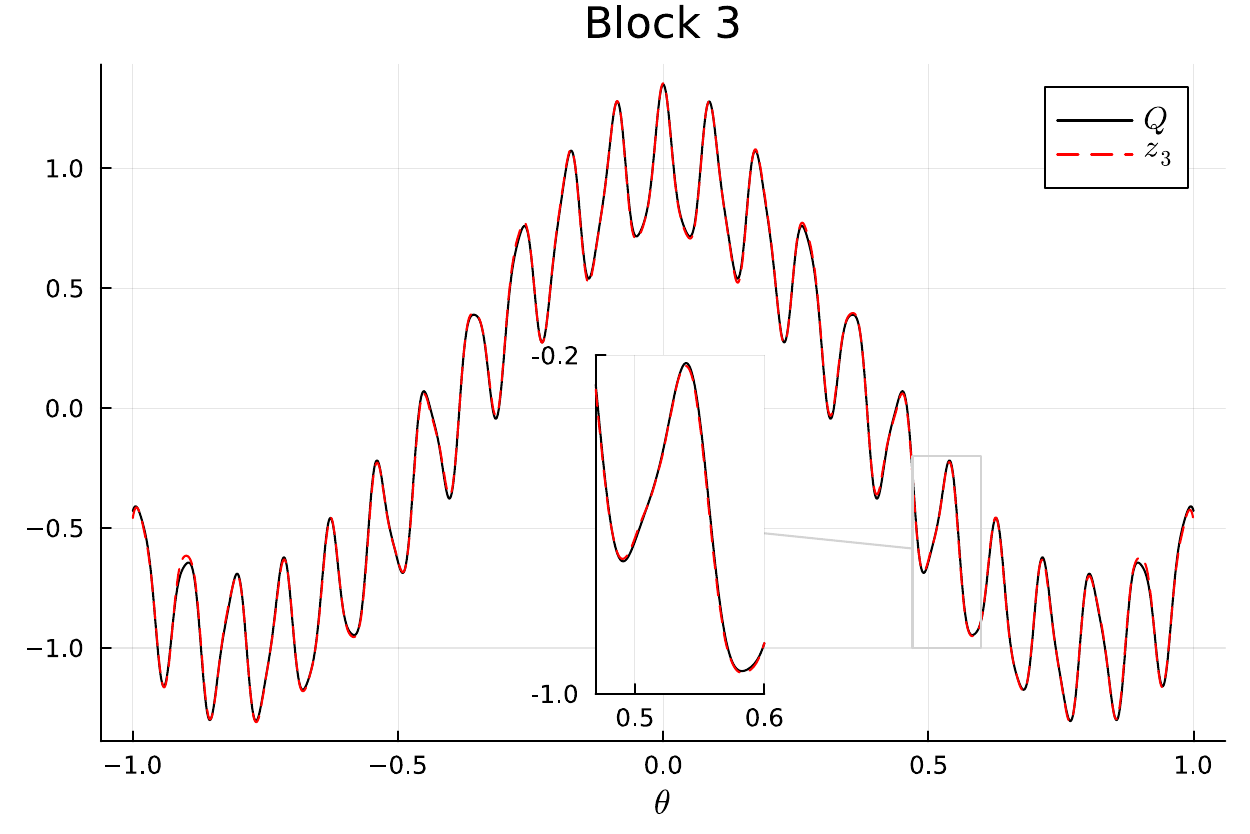}
    \includegraphics[width=0.33\textwidth]{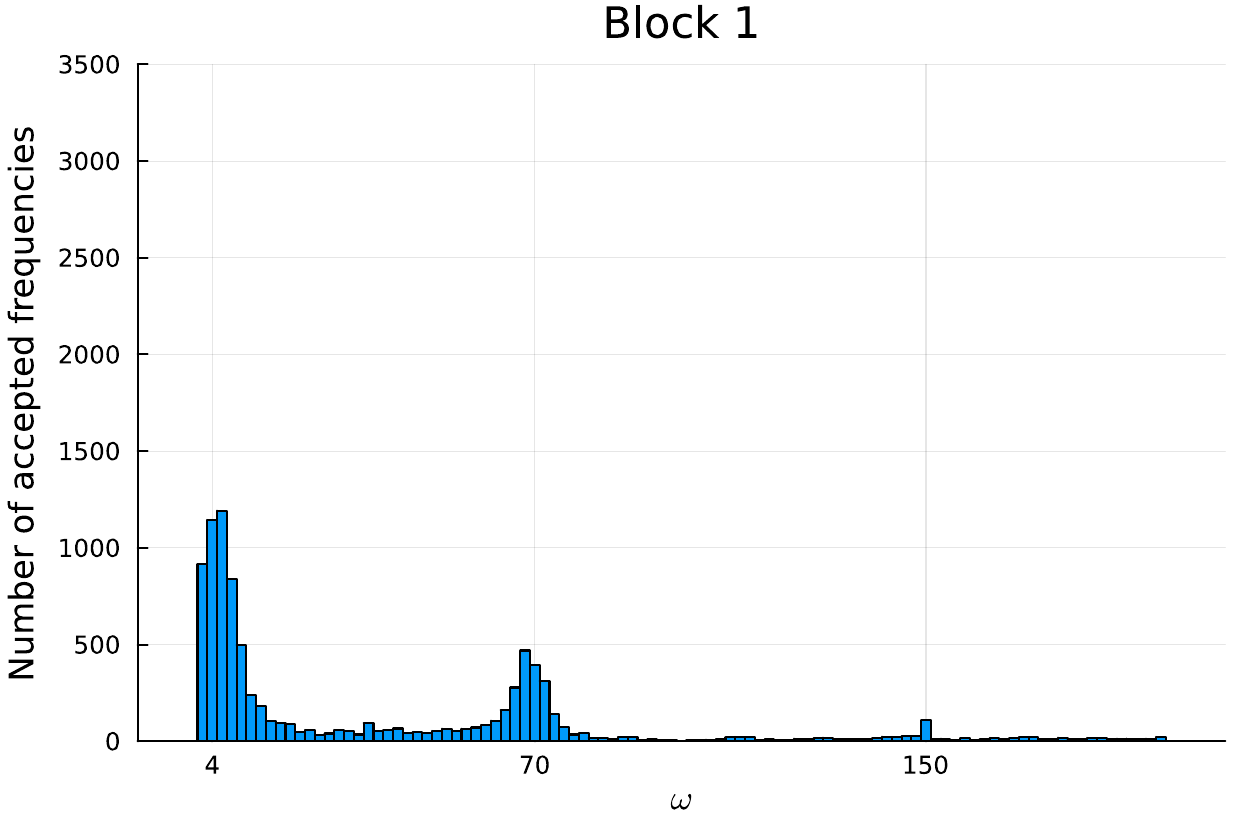}\includegraphics[width=0.33\textwidth]{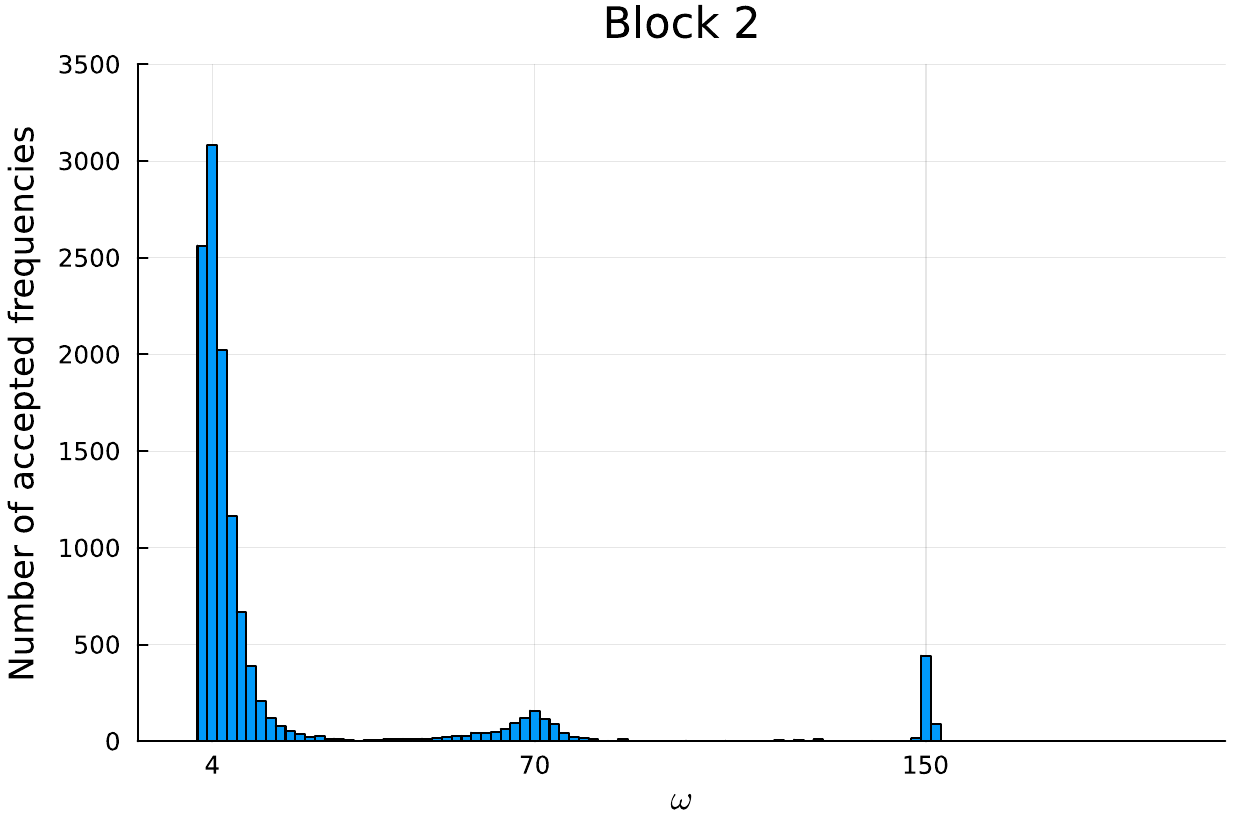}
    \includegraphics[width=0.33\textwidth]{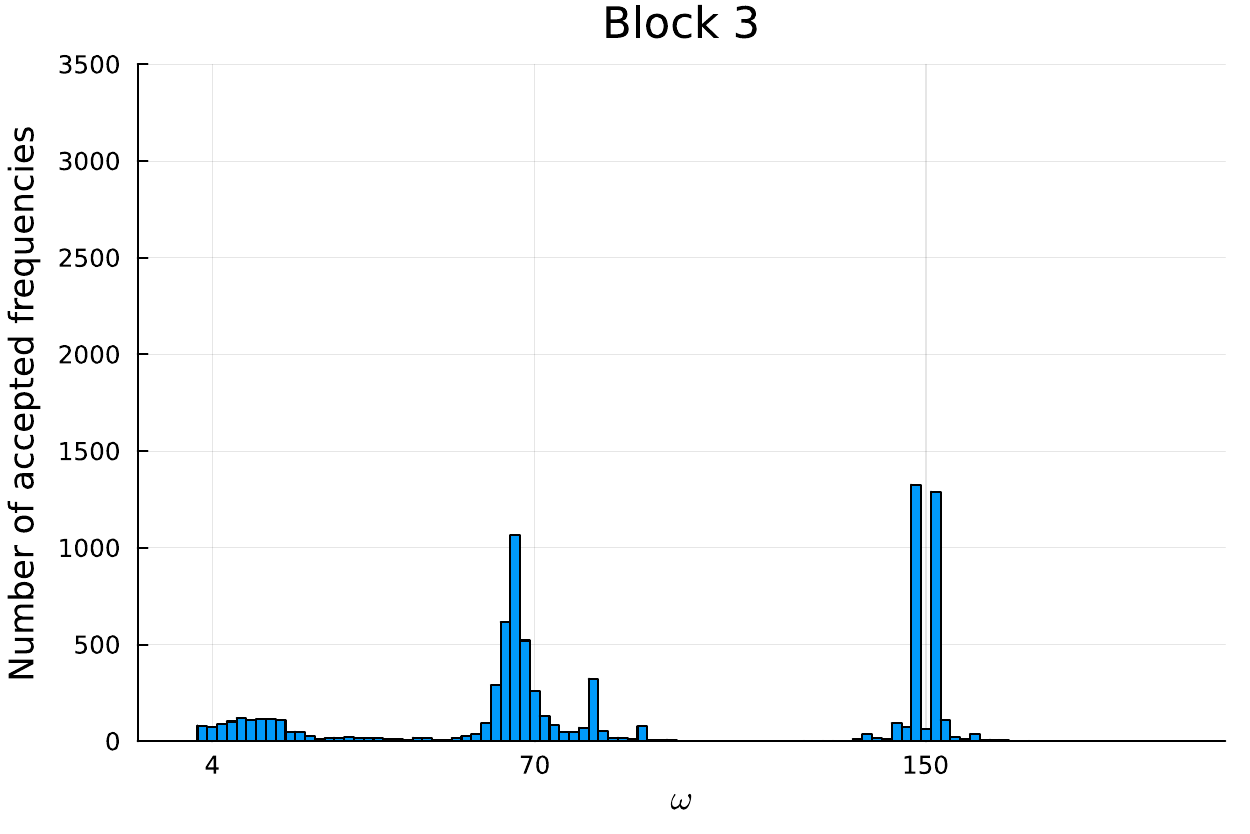}
    \caption{On the top row, network predictions (red dashed) versus the true target function (black solid) for the first three blocks of training (left to right), and on the bottom row, the accepted frequencies over all Markov Chains after a burn-in of $2000$ MCMC iterations.}
    \label{fig:multiscale_learning_progress}
\end{figure}

The training progression pictured in Figure~\ref{fig:multiscale_learning_progress} also shows that network depth combined with enforced residual learning between blocks provides a way for rFNNs using block-by-block training to efficiently learn small scale target function features. This is most apparent in the relative peak heights in the histograms between block $2$ and block $3$. Early in network training (blocks $1$ and $2$) the frequency $4$, and to a lesser extent $70$, are prioritized due to their large amplitudes relative to that associated with frequency $150$. However, at block 3, since we explicitly target the discrepancy $Q-z_{2}$, the amplitudes associated with the frequencies $4$ and $70$ have been reduced, and this facilitates a large relative increase in the number of frequencies sampled near $150$.

We additionally provide convergence results as a function of network complexity $WL$ in Figure~\ref{fig:multiscale_convergence}, where we observe much faster than the theoretical approximation rate over the first $10$ blocks of training.
\begin{figure}[!htb]
    \centering
    \includegraphics[width=0.4\textwidth]{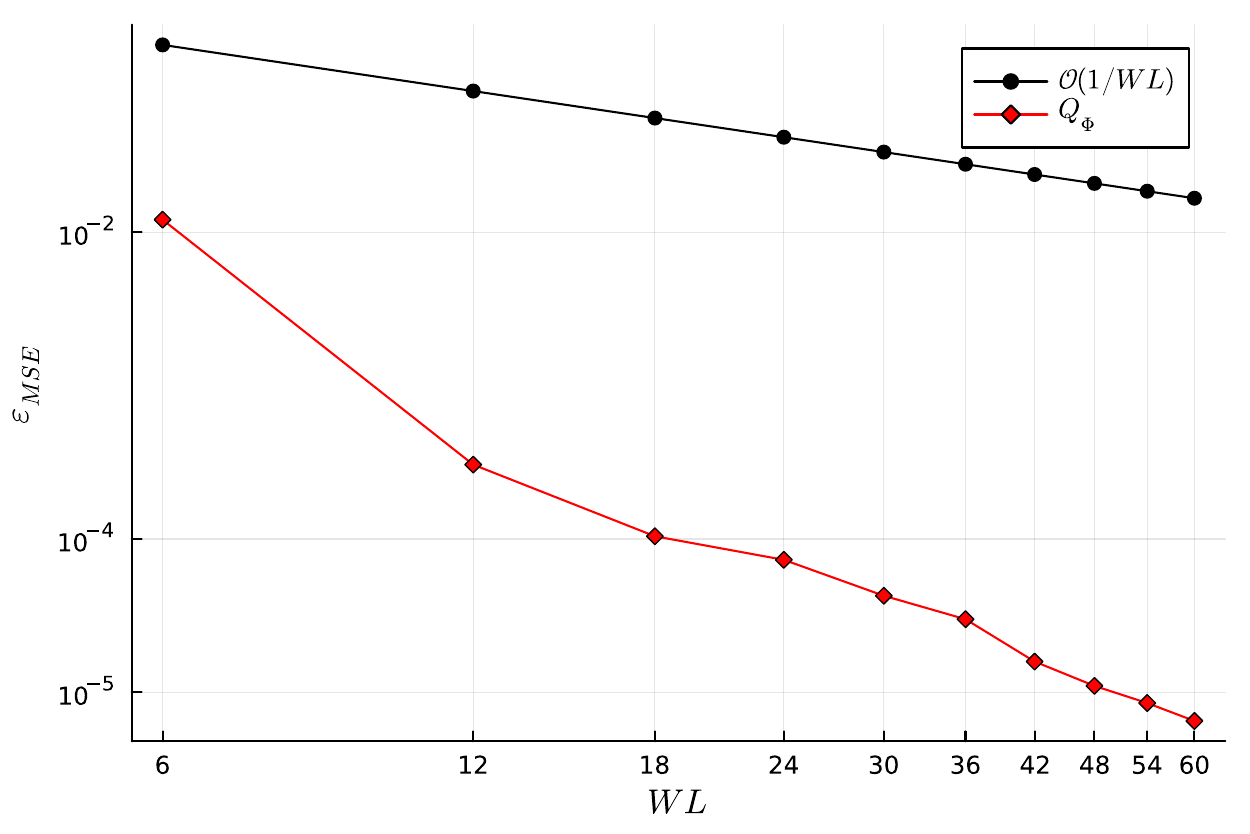}
    \caption{Mean squared error in the network predictions (red diamonds) and predicted convergence rate (black circles) as a function of network complexity $WL$ over the first 10 blocks of training.}
    \label{fig:multiscale_convergence}
\end{figure}

As a comparison with a global optimization-based training approach, in Figure~\ref{fig:multiscale_global}, we picture a network prediction from a Fourier neural network $Q^{global}_{\Phi}$ with architecture $(W,L) = (6,3)$ trained with global ADAM optimization for $30000$ epochs as well as the training loss curve. Note that the complexity of this network is the same as the block 3 prediction from the network trained with our block-by-block algorithm; see the rightmost plot in Figure~\ref{fig:multiscale_learning_progress}. The loss function for this global optimization procedure is given by
\begin{equation}\label{ADAM_loss}
    \mathcal{L}(\Phi) = \frac{1}{N}\sum_{n=1}^{N}|Q_{\Phi}^{global}(\theta^{(n)})-Q(\theta^{(n)})|^2 + \lambda|\bm{b}|^2,
\end{equation}
which is the mean squared error on the training set augmented by Tikhonov regularization on the amplitude parameters identical to that used in the block-by-block training algorithm, with Tikhonov regularization parameter $\lambda\geq0$. 
\begin{figure}[!htb]
    \centering
    \includegraphics[width=0.4\textwidth]{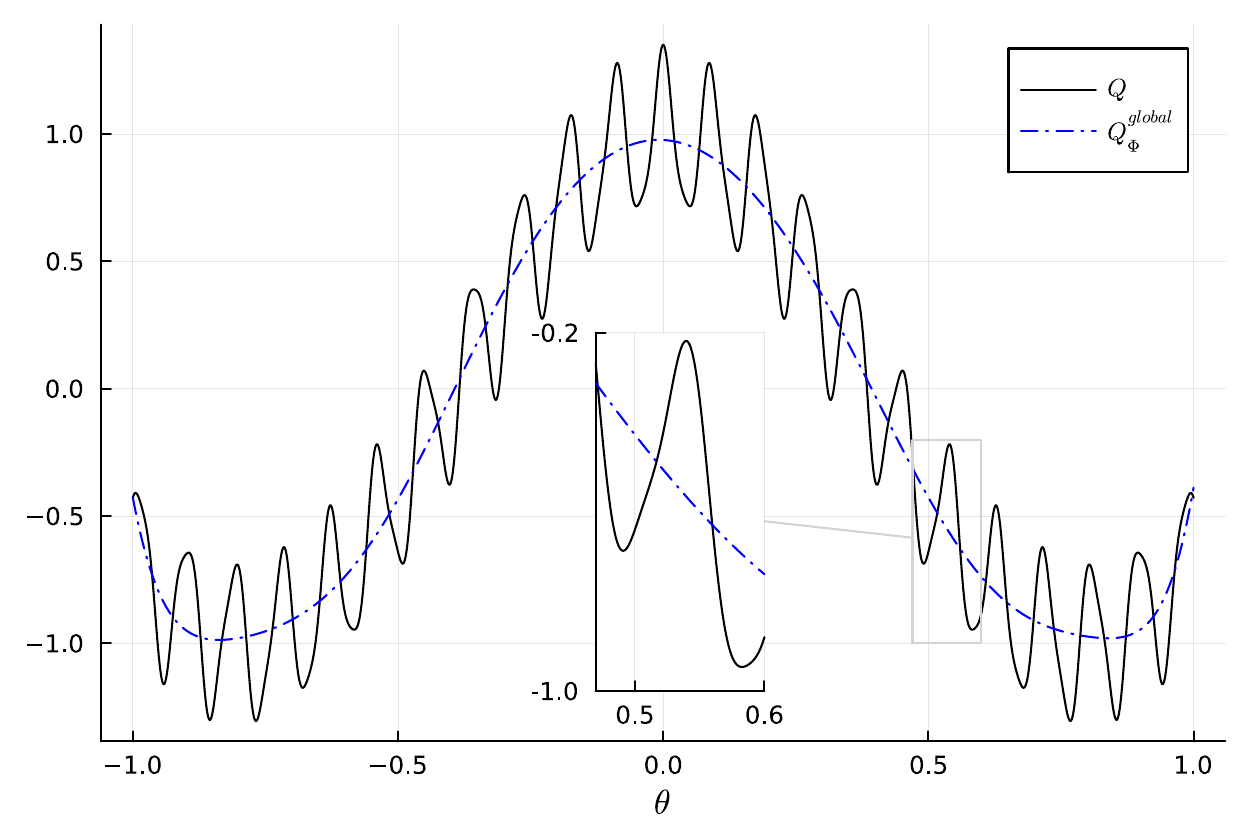}\includegraphics[width=0.4\textwidth]{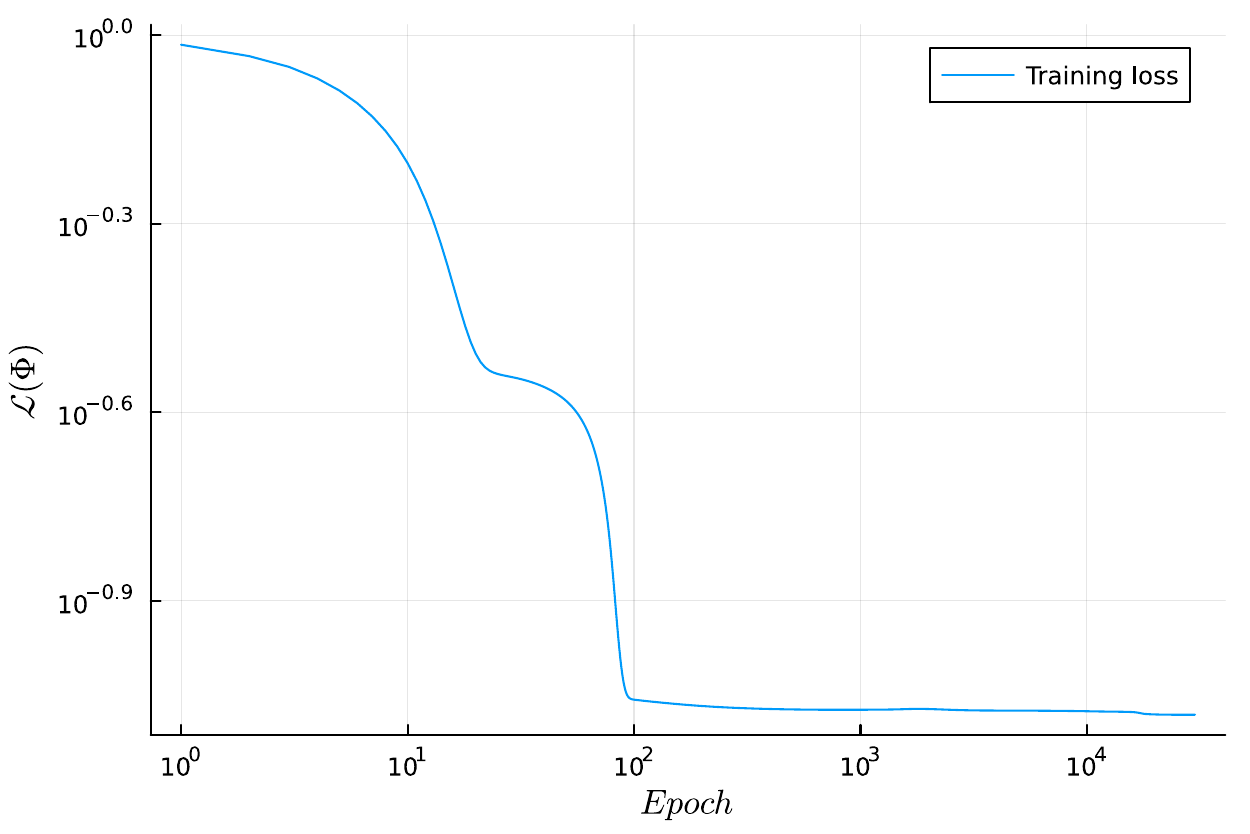}
    \caption{A Fourier neural network $Q^{global}_{\Phi}$ with architecture $(W,L)=(6,3)$ trained with global Adam optimization for $30000$ epochs (left) and the loss $\mathcal{L}(\Phi)$ on the training set a function of Epoch number (right) }
    \label{fig:multiscale_global}
\end{figure}
Despite this long training time for a one-dimensional problem, the resulting network prediction is considerably worse than the block $3$ prediction of the rFNN trained with our block-by-block algorithm. 
This result is consistent with the known connection between global gradient-based optimization (such as ADAM) and spectral bias. As evident from Figure 4, the network quickly learns the low frequency in just the first 100 epochs of training, but then completely stagnates for the remaining $\sim 30000$ epochs, and even after this long training time, does not capture either of the high-frequency target function features. We remark here as well that similar behavior is observed for deeper Fourier neural networks trained with ADAM. This observation provides evidence that the superior performance of rFNNs in approximating this oscillatory multiscale target function cannot be attributed to the use of a sinusoidal approximation basis. Without block-by-block training, the network still faces challenges in learning multiscale features with reasonable computational complexity.

Importantly, we do not claim that the Fourier neural network trained with ADAM is incapable of learning the target function given infinite training time and optimal hyperparameter choices, rather that given this multiscale target, the learning is remarkably slow compared to our block-by-block algorithm.

\subsection{A discontinuous target function}\label{sec:discont}

Consider the stairstep function pictured and defined in Figure~\ref{fig:stairstep}.
\begin{figure}[!h]
  \begin{minipage}{.5\textwidth}
    \begin{equation*}
    Q(\theta) =\begin{cases}
        0 & \theta\in [-1,-1/2)\\
        1/3 & \theta\in[-1/2,0)\\
        2/3 & \theta\in [0,1/2)\\
        1 & \theta\in [1/2,1].
    \end{cases}
\end{equation*}
  \end{minipage}%
  \begin{minipage}{.5\textwidth}
    \centering
    \includegraphics[width = 0.7\textwidth]{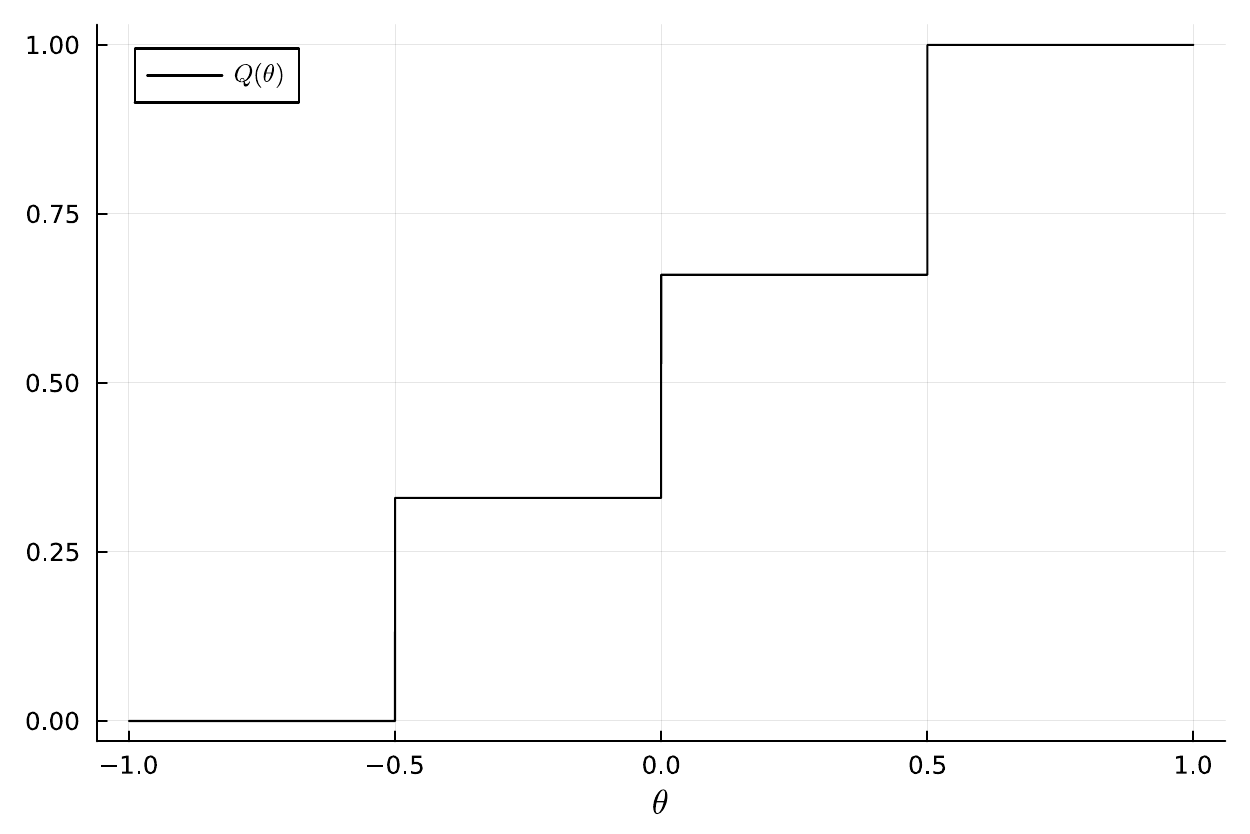}
  \end{minipage}
  \caption{A discontinuous target function $Q(\theta)$.}
    \label{fig:stairstep}
\end{figure}
\medskip

In this numerical example, our focus is on evaluating the performance and convergence rate of rFNNs in approximating discontinuous target functions using our block-by-block training algorithm. Moreover, we provide a direct comparison between our block-by-block training and the Metropolis algorithm used in \cite{deepFF_kammonen}. Unlike the multiscale and oscillatory target function discussed in Section~\ref{sec:multiscale}, there is no inherent advantage in using a sinusoidal approximation basis here. In fact, the opposite holds true. Fourier sum approximation of discontinuities formally requires an infinite number of terms, and finite Fourier sums exhibit Gibbs phenomena near discontinuities \cite{grafakos2008classical}.

We conduct an approximation of the stairstep function using rFNNs with two different training methods that we describe below.

\begin{itemize}
    \item {\bf Method 1} is the block-by-block training that we develop in this work. We call the resulting network $Q_{\Phi}^{(Method\:1)}$ and the output of each block $z_{\ell}^{(Method\:1)}$, $\ell=1,\dotsc L.$
    \item {\bf Method 2} is block-by-block training where the frequencies $\bm{\omega}_{\ell}'$ at each block $\ell>1$ are sampled once from a standard normal distribution and then never updated during training. This is exactly Algorithm 2 in \cite{deepFF_kammonen}. By not optimally sampling $\bm{\omega}_{\ell}'$, the expressiveness of the term $g_{\ell}'(z_{\ell-1};\bm{\omega}'_{\ell},\bm{b}'_{\ell})$ is limited, necessitating that $g_{\ell}(\theta,\bm{\omega}_{\ell};\bm{b}_{\ell})$ serve as the primary approximator of the target function at block $\ell>1$. We call the network resulting from this training $Q_{\Phi}^{(Method\:2)}$ and the output of each block $z_{\ell}^{(Method\:2)}$, $\ell=1,\dotsc, L$.
\end{itemize}
We use rFNNs of width $W=6$. In Figure~\ref{fig:stairstep_conv}, we plot the mean squared error over the first $10$ blocks of training. 
\begin{figure}[!htb]
    \centering\includegraphics[width = 0.4\textwidth]{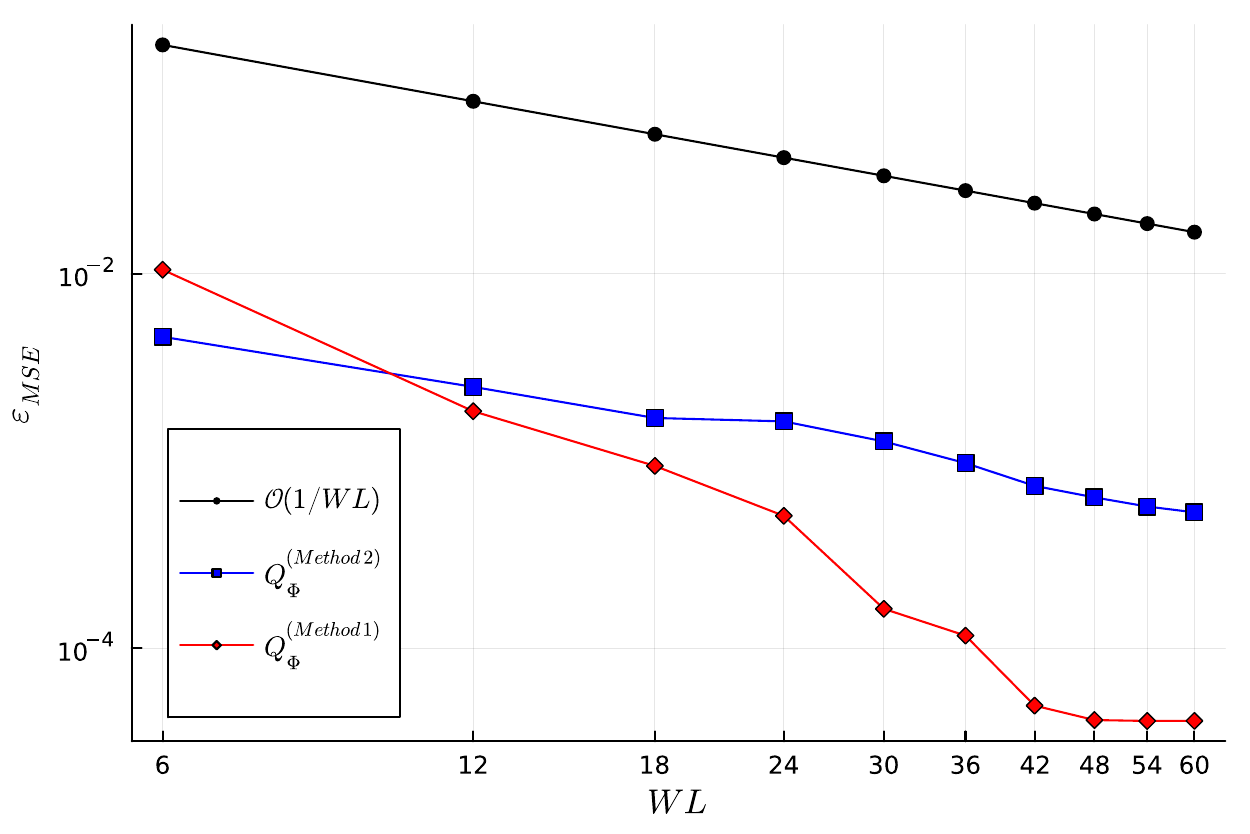}
    \caption{Mean squared error for networks approximating the stairstep function pictured in Figure~\ref{fig:stairstep} trained with Method 1 (red diamonds) and Method 2 (blue squares) after blocks $1$ through $10$ as a function of $WL$.}
    \label{fig:stairstep_conv}
\end{figure}
The red diamonds correspond to the approximation error in an rFNN trained with Method 1, while the blue squares represent a network trained with Method 2. The black circles denote the theoretical approximation rate. Notably, our block-by-block training surpasses the theoretical approximation rate and outperforms the network trained with Method 2; indeed, by block 10, the difference in error on the test set exceeds an order of magnitude. 
The error in the $Q_{\Phi}^{(Method\:1)}$ approximation appears to plateau in the later blocks of training, likely due to insufficient training data. With small tolerances, the error in the approximation concentrates near the discontinuities, suggesting that additional targeted training samples and network complexity may be required to further reduce the approximation error. The notable performance gap between Method 1 and Method 2 indicates that the optimal sampling of frequencies $\bm{\omega}_{\ell}'$ is crucial for effectively approximating discontinuities and functions with sharp features.

\begin{figure}[!htb]
    \centering
    \includegraphics[width=0.33\textwidth]{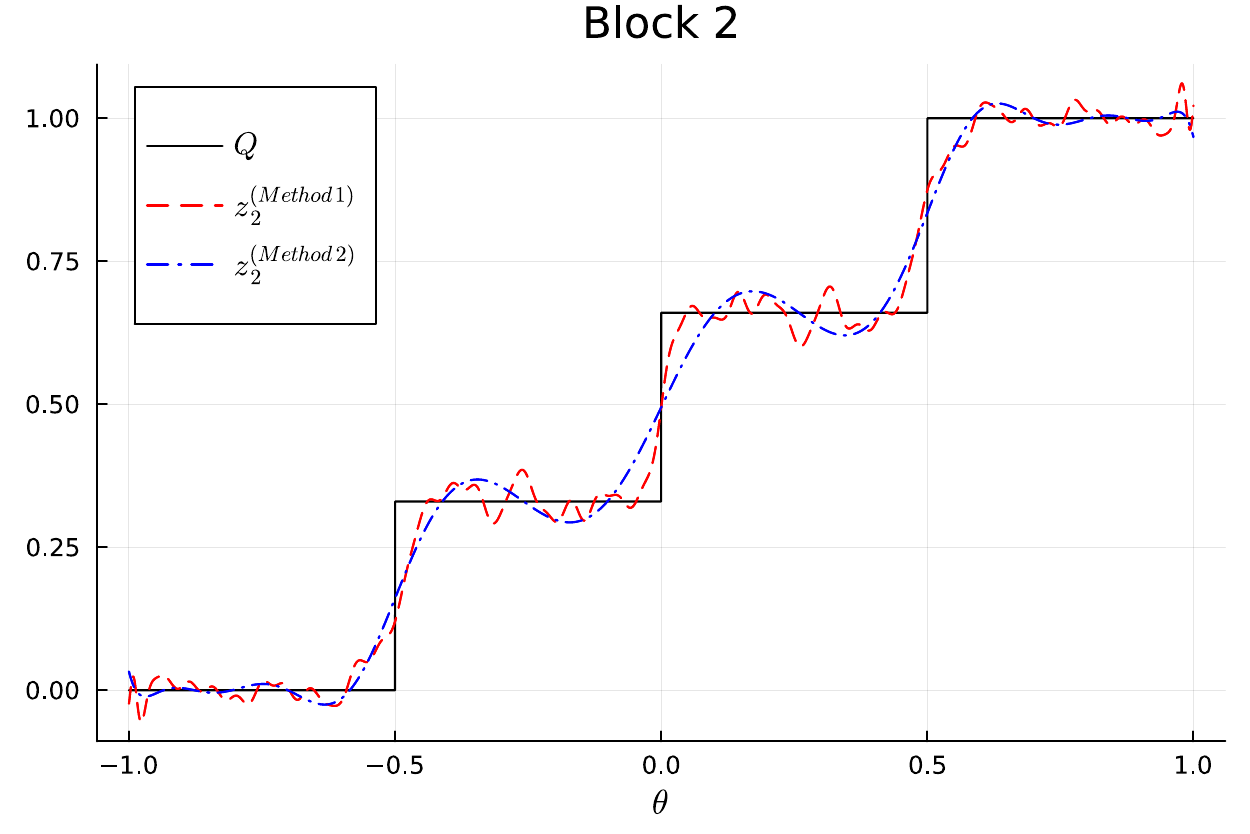}\includegraphics[width=0.33\textwidth]{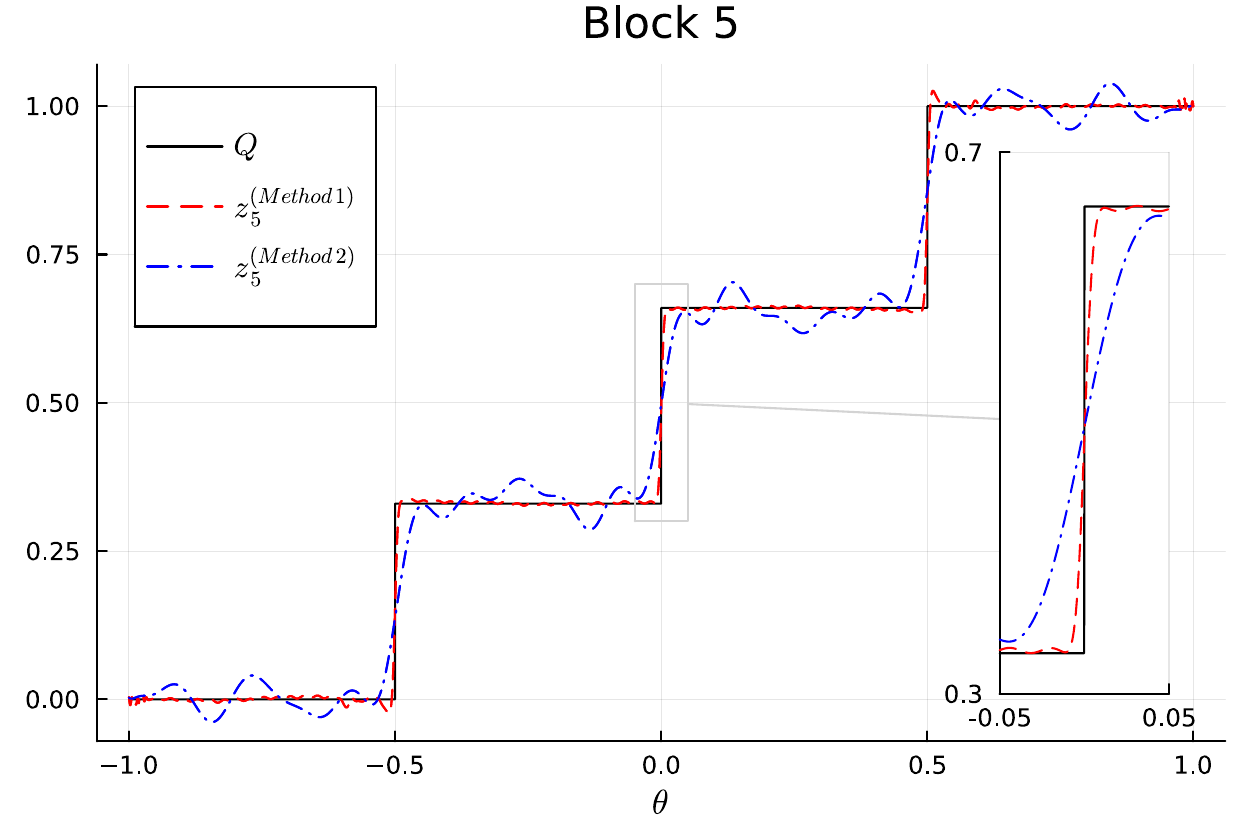}\includegraphics[width=0.33\textwidth]{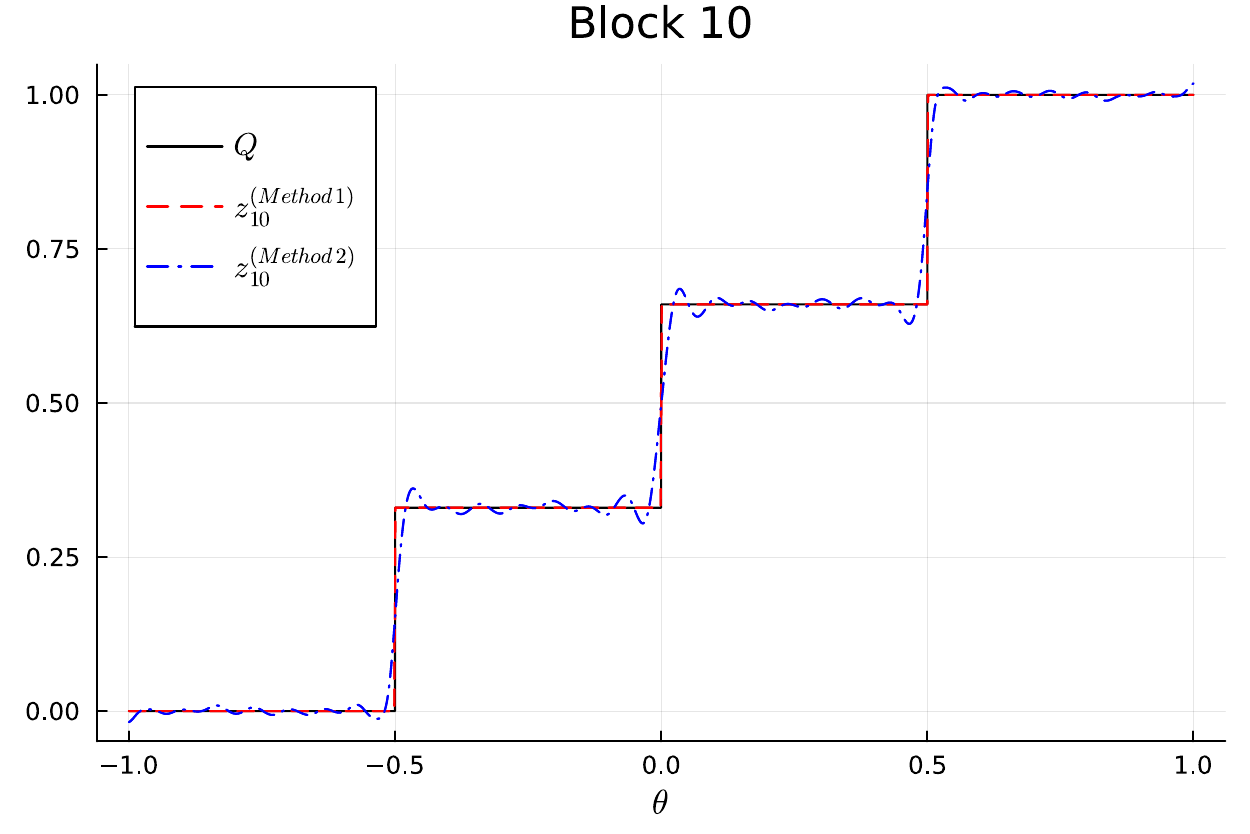}
    \caption{Network predictions for approximating the stairstep function pictured in Figure~\ref{fig:stairstep} after block 2 (left), block 5 (middle), and block 10 (right); results are pictured for a network trained with Method 1 (red dash) and Method 2 (blue dash-dot).}
    \label{fig:staistep_progression}
\end{figure}

In Figure~\ref{fig:staistep_progression}, we plot predictions from networks trained with both Method 1 (red dash) and Method 2 (blue dash-dot). We picture results after block 2 (left), block 5 (middle), and block 10 (right). By block 10, the network trained with Method 1 issues a prediction without Gibbs oscillations at the discontinuities. In contrast, the network trained with Method 2 offers a subpar prediction even at block 10, failing to effectively approximate the jump discontinuities. This qualitative analysis provides further evidence that the optimal sampling of the frequencies $\bm{\omega}_{\ell}'$ throughout training is essential to accurately approximating discontinuous target functions. 

We reiterate here that the frequencies $\bm{\omega}_{\ell}$ are trainable parameters associated with the network's standard Fourier modes, whereas the frequencies $\bm{\omega}_{\ell}'$ are associated with the network's basis functions which are compositions of Fourier modes. If the frequencies $\bm{\omega}_{\ell}'$ are not sampled optimally, as in Method 2 (Algorithm 2 from \cite{deepFF_kammonen}), then the brunt of the approximation has to be conducted by standard Fourier modes, and not surprisingly we observe Gibbs oscillations at the discontinuities. On the contrary, if the frequencies $\bm{\omega}_{\ell}'$ are sampled optimally, as in Method 1 (our block-by-block algorithm), then we are able to approximate the discontinuities sharply, from finite training data, and with relatively small network complexity. This supports our hypothesis that the compositional basis functions in rFNNs play an important role outside of simple identity mapping of the previous block's predictions, and that taking advantage of the additional expressivity requires that the frequencies $\bm{\omega}_{\ell}'$ be sampled optimally. These numerical results inspire potential future research concerning the approximation capabilities of basis functions which are nested compositions of Fourier modes. Overall, this section provides a direct comparison between our block-by-block training and the Metropolis algorithm used in \cite{deepFF_kammonen}. We observe benefit to our developed algorithm in the form of a faster approximation rate and improved ability to capture discontinuous features.

\subsection{A multidimensional target function}\label{sec:multidim}

Consider the three-dimensional regularized sine discontinuity given by 
\begin{equation}
    Q(\bm{\theta}) = e^{-\frac{|\bm{\theta}-\bm{c}|^2}{2}}\left(\int_{0}^{\frac{\theta^{(1)}-0.5}{0.1}}\frac{\sin(t)}{t}\:dt\right),
\end{equation}
where $\bm{c} = (0.5,0.5,0.5)$ and $\bm{\theta} = (\theta^{(1)},\theta^{(2)},\theta^{(3)})\in [0,1]^{3}$. This target functional is notable because it is nonlinear in all dimensions, and in dimension 1 has a frequency spectrum that decays slowly like $1/|\omega|$ for $\omega\in [0,10]$. We choose such a function intentionally to avoid showcasing a multidimensional example where the target function admits a notably simple and efficient Fourier representation. Here, our aim is to assess whether rFNNs trained with our block-by-block training algorithm can effectively approximate multidimensional functions and whether we achieve the predicted approximation rate. For this approximation task we use a network of width $W=4$. In Figure~\ref{fig:multidim}, we plot the mean squared error in this approximation (red diamonds) and the theoretical approximation rate (black circles) as a function of network complexity $WL$ over the first $10$ blocks of training. 
\begin{figure}[!htb]
    \centering
    \includegraphics[width=0.4\textwidth]{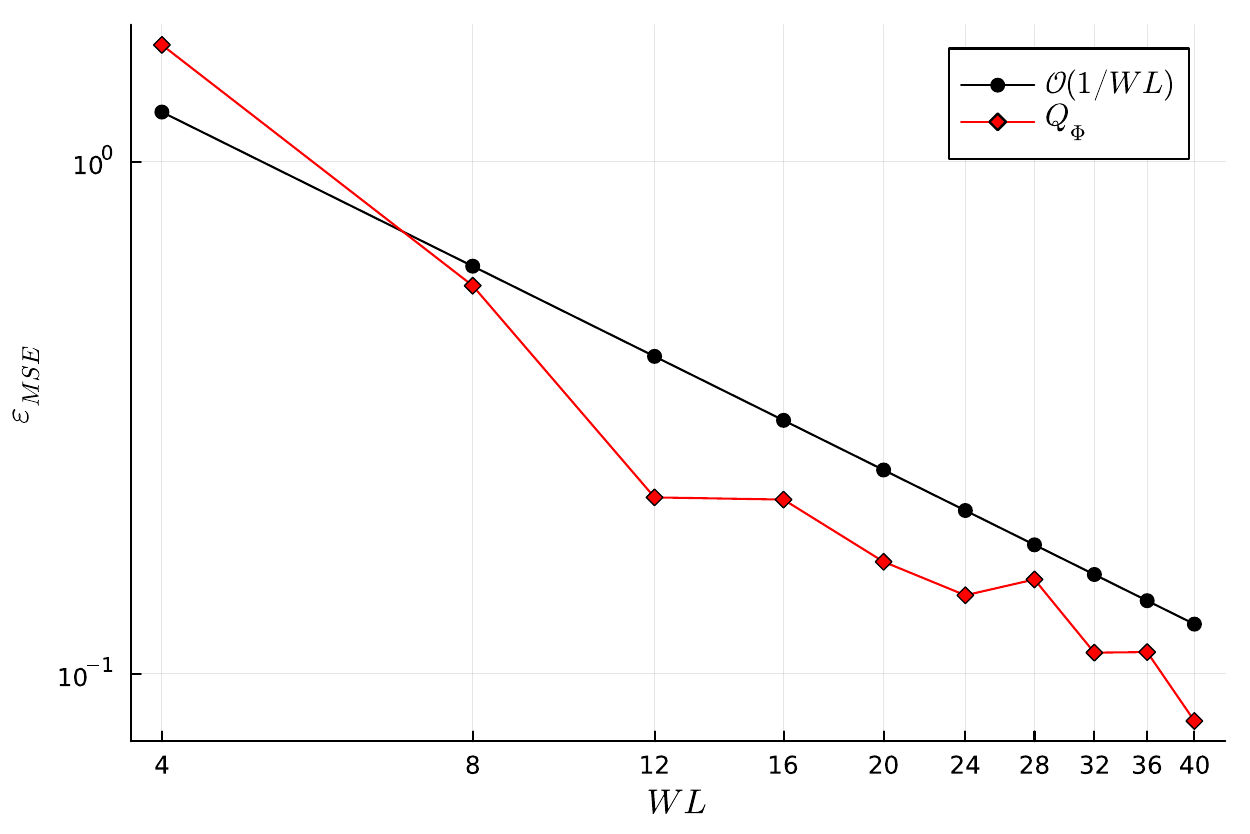}
    \caption{Mean squared error (red diamonds) and theoretical approximation rate (black circles) as a function of network complexity $WL$.}
    \label{fig:multidim}
\end{figure}
As seen in Figure~\ref{fig:multidim}, we recover the theoretical approximation rate. This is notable considering we use a network of very small width. Recall that the approximation error estimate in Theorem~\ref{thm:ff_main_result} theoretically requires that the network architecture satisfies $W=\mathcal{O}(L^2)$. However, in our case, we have $W<L$ (when $L \ge 5$), yet we still observe the theoretical approximation rate. 

We further emphasize that the ability to approximate a multidimensional target function with such small width is initially surprising. Typically, standard feedforward networks require that width grow with problem dimension. However, it is important to note that width in rFNNs differs from width in standard feedforward neural networks. Each neuron in a given block of an rFNN is associated with a $d$-dimensional frequency parameter, whereas in standard feedforward networks each neuron is associated with a $1$-dimensional weight parameter. Hence, although standard feedforward networks generally require increasing width with dimension, this is not necessarily a requirement for rFNNs.

\section{Conclusion}\label{sec:conclusion}

In this work, we developed a sampling-based training algorithm with error control for random Fourier neural networks. Unlike conventional neural network training algorithms, which consider a predefined network architecture, and then conduct global optimization over all network parameters simultaneously, our algorithm is iterative, training each block of the network in sequence via a Metropolis within Gibbs sampling procedure that seeks to sample a priori optimal distributions of frequency parameters at each block. Using this algorithm, the network architecture does not have to be specified ahead of time. Network blocks can be added and trained one at a time, calculating a chosen error metric after each addition. Training can stop once a desired tolerance is achieved. In this way, the algorithm provides a notion of error control with respect to network depth, and there is no need for global optimization of all network parameters simultaneously.

We evaluated our training algorithm on three numerical examples that highlighted different aspects of its behavior. In Section~\ref{sec:multiscale}, we showed that rFNNs trained with our block-by-block algorithm can approximate multiscale target function features with low network complexity, overcoming spectral bias. In Section~\ref{sec:discont}, we considered a discontinuous target function, and despite employing a sinusoidal approximation basis, we did not observe Gibbs oscillations. Furthermore, we emphasized the importance of optimal sampling of the frequencies $\bm{\omega}_{\ell}'$ in avoiding Gibbs phenomena and approximating target functions with sharp features. In Section~\ref{sec:multidim}, we showcased the capability to approximate multidimensional functions with rFNNs trained by our block-by-block algorithm. Additionally, over all numerical examples, we observed the only available theoretical approximation rate for rFNNs in terms of network complexity.

There are numerous directions for future research, including but not limited to exploring the scalability of the proposed algorithm and its deployment on high-performance computing resources, adapting the algorithm to networks with different activation functions and varying architectures, extending the algorithm to accommodate vector-valued target functions, optimizing the selection of training data, assessing its performance under sparser data conditions, and thoroughly characterizing and quantifying uncertainty in the algorithm. The extension to vector-valued target functions is theoretically straightforward  and the subject of ongoing work. Regarding uncertainty quantification, since the algorithm is MCMC-based, similar to Bayesian neural networks, obtaining distributions over network parameters comes at little additional training cost. Leveraging this capability to obtain reliable and embedded uncertainty estimates without incurring the usual high computational cost associated with training Bayesian networks represents a promising direction for future exploration.

\section*{Acknowledgements}

Sandia National Laboratories is a multi-mission laboratory managed and operated by National Technology \& Engineering Solutions of Sandia, LLC (NTESS), a wholly owned subsidiary of Honeywell International Inc., for the U.S. Department of Energy's National Nuclear Security Administration \newline (DOE/NNSA) under contract DE-NA0003525. This written work is authored by an employee of NTESS. The employee, not NTESS, owns the right, title and interest in and to the written work and is responsible for its contents. Any subjective views or opinions that might be expressed in the written work do not necessarily represent the views of the U.S. Government. The publisher acknowledges that the U.S. Government retains a non-exclusive, paid-up, irrevocable, world-wide license to publish or reproduce the published form of this written work or allow others to do so, for U.S. Government purposes. The DOE will provide public access to results of federally sponsored research in accordance with the DOE Public Access Plan.

\bibliographystyle{unsrt}
\bibliography{main}

\appendix

\section{Derivation of optimal frequency distributions}\label{appendix}

In this section, we derive the optimal frequency distributions for block $\ell>1$, by extending the arguments presented in \cite{1layerFF_kammonen}. This derivation proceeds by finding an upper bound on the block $\ell>1$ generalization error
\begin{equation*}
    \mathbb{E}_{\bm{\omega}_{\ell},\bm{\omega}_{\ell}'}[\min_{\bm{b}_{\ell},\bm{b}'_{\ell}}\{\mathbb{E}_{\theta}[|r_{\ell}(\theta, z_{\ell-1})-g_{\ell}(\theta)-g_{\ell}'(z_{\ell-1})|^2]+\lambda_{\ell}|\bm{b}_{\ell},\bm{b}_{\ell}'|^2 \}].
\end{equation*}
Recall that at block $\ell>1$, we assume that the target function has the form \\$r_{\ell}(\theta,z_{\ell-1}) = \bar{r}_{\ell}(\theta) + \bar{r}_{\ell}'(z_{\ell-1})$ for some unknown functions $\bar{r}_{\ell}$ and $\bar{r}'_{\ell}$. The first step in deriving the optimal frequency distributions is introducing the Fourier representations
\begin{equation}
    \bar{r}_{\ell}(\theta)= (2\pi)^{-d/2}\int_{\mathbb{R}^d}\hat{\bar{r}}_{\ell}(\omega)e^{i\omega\cdot \theta}\:d\omega,\qquad \bar{r}'_{\ell}(z_{\ell-1})= (2\pi)^{-1/2}\int_{\mathbb{R}}\hat{\bar{r}}'_{\ell}(\omega')e^{i\omega'z_{\ell-1}}\:d\omega',
\end{equation}
and their corresponding Monte Carlo estimators
\begin{equation*}
    g(\theta,\bm{\omega}_{\ell}) = \frac{1}{W}\sum_{j=1}^{W}\frac{\hat{g}(\omega_{\ell j})e^{i\omega_{\ell j}\cdot \theta}}
    {(2\pi)^{d/2}p_{\ell}(\omega_{\ell j})}, \qquad g'(z_{
    \ell-1
    },\bm{\omega}_{\ell}') = \frac{1}{W}\sum_{j=1}^{W}\frac{\hat{g}'(\omega'_{\ell j})e^{i\omega'_{\ell j}z_{\ell-1}}}{(2\pi)^{1/2}q_{\ell}(\omega'_{\ell j})}.
\end{equation*}
Recall here that $\omega_{\ell 1},\dotsc, \omega_{\ell W}$ are i.i.d. random variables with common marginal distribution $p_{\ell}:\omega\in \mathbb{R}^d\mapsto [0,\infty)$, and where $\omega_{\ell 1}',\dotsc,\omega_{\ell W}'$ are i.i.d. random variables with common marginal distribution $q_{\ell}:\omega'\in\mathbb{R}\mapsto [0,\infty)$. We further assume that $\bm{\omega}_{\ell}$ and $\bm{\omega}'_{\ell}$ are independent. Note that $g$ and $g'$ are unbiased estimators of $\bar{r}_{\ell}$ and $\bar{r}'_{\ell}$ respectively; that is 
\begin{equation*}
    \mathbb{E}_{\bm{\omega}_{\ell}}[g(\theta,\bm{\omega}_{\ell})] = \bar{r}_{\ell}(\theta),\quad \mathbb{E}_{\bm{\omega}_{\ell}'}[g'(z_{\ell-1},\bm{\omega}_{\ell}')] = \bar{r}'_{\ell}(z_{\ell-1}).
\end{equation*}
From here, the key insight lies in recognizing that the sum of the Monte Carlo estimators $g+g'$ shares the same structure as block $\ell>1$ of a random Fourier neural network with the particular amplitudes $\bm{\beta}=(\beta_{1},\dotsc, \beta_{W})$ and $\bm{\beta}' = (\beta'_{1},\dotsc, \beta_{W})$ given by
\begin{equation*}
\beta_{j} = \frac{\hat{\bar{r}}_{\ell}(\omega_{\ell j})}{W(2\pi)^{d/2}p_{\ell}(\omega_{\ell j})}, \qquad \beta_{j}' = \frac{\hat{\bar{r}}'_{\ell}(\omega_{\ell j}')}{W(2\pi)^{1/2}q_{\ell}(\omega_{\ell j}')}.
\end{equation*}
Therefore, to investigate approximation properties of block $\ell>1$, we analyze the specific version which corresponds to the sum of the Monte Carlo estimators $g+g'$. In particular, using that $\bm{\omega}_{\ell}$ and $\bm{\omega}_{\ell}'$ are independent and the definition of variance of a Monte Carlo estimator we calculate,
\begin{align*}
\mathbb{V}_{\bm{\omega}_{\ell},\bm{\omega}_{\ell}'}[g(\theta,\bm{\omega}_\ell)&+g'(z_{\ell-1},\bm{\omega}'_{\ell})] = \mathbb{V}_{\bm{\omega}_{\ell}}[g(\theta,\bm{\omega}_{\ell})]+\mathbb{V}_{\bm{\omega}_{\ell}'}[g'(z_{\ell-1},\bm{\omega}'_{\ell})]\\
&=\frac{1}{W}\mathbb{E}_{\omega}\left[\frac{|\hat{\bar{r}}_{\ell}(\omega)|^2}{(2\pi)^dp_{\ell}^2(\omega)}-\bar{r}^2_{\ell}(\theta)\right] + \frac{1}{W}\mathbb{E}_{\omega'}\left[\frac{|\hat{\bar{r}}'_{\ell}(\omega')|^2}{(2\pi)q_{\ell}^2(\omega')}-\bar{r}'^2_{\ell}(z_{\ell-1})\right].
\end{align*}
Now using this expression for the variance of the Monte Carlo estimators we can derive the following upper bound on the block $\ell>1$ generalization error

{\small
\begin{align*}
    \mathbb{E}_{\bm{\omega}_{\ell},\bm{\omega}_{\ell}'}[\min_{\bm{b}_{\ell}, \bm{b}_{\ell}'}\{\mathbb{E}_{\theta}[|r_{\ell}-g_{\ell}-&g_{\ell}'|^2]+\lambda_{\ell}|\bm{b}_{\ell},\bm{b}_{\ell}'|^2\}\leq \mathbb{E}_{\bm{\omega}_{\ell},\bm{\omega}_{\ell}'}[\mathbb{E}_{\theta}[|\bar{r}_{\ell}-g + \bar{r}'_{\ell}-g'|^2]+\lambda_{\ell}|\bm{\beta},\bm{\beta}'|^2 ]\\
    &\leq \frac{1}{W}\mathbb{E}_{\theta}\left[\mathbb{E}_{\omega}\left[\frac{|\hat{\bar{r}}_{\ell}(\omega)|^2}{(2\pi)^dp_{\ell}^2(\omega)}-\bar{r}^2_{\ell}(\theta)\right]\right] +\frac{\lambda_{\ell}}{W}\mathbb{E}_{\omega}\left[\frac{|\hat{\bar{r}}_{\ell}(\omega)|^2}{(2\pi)^dp_{\ell}^2(\omega)}\right]\\
    &+\frac{1}{W}\mathbb{E}_{\theta}\left[\mathbb{E}_{\omega'}\left[\frac{|\hat{\bar{r}}'_{\ell}(\omega')|^2}{(2\pi)q_{\ell}^2(\omega')}-\bar{r}'^2_{\ell}(z_{\ell-1})\right]\right] +\frac{\lambda_{\ell}}{W}\mathbb{E}_{\omega'}\left[\frac{|\hat{\bar{r}}'_{\ell}(\omega')|^2}{(2\pi)q_{\ell}^2(\omega')}\right]\\
    &\leq \frac{1+\lambda_{\ell}}{W}\left(\mathbb{E}_{\omega}\left[\frac{|\hat{\bar{r}}_{\ell}(\omega)|^2}{(2\pi)^{d}p_{\ell}^2(\omega)}\right] + \mathbb{E}_{\omega'}\left[\frac{|\hat{\bar{r}}'_{\ell}(\omega')|^2}{(2\pi)q_{\ell}^2(\omega')}\right]\right).
\end{align*}}
The final step is to show that this upper bound is minimized for the optimal distributions 
\begin{equation*}
    p_{\ell}^*(\omega) = \frac{|\hat{\bar{r}}_{\ell}(\omega)|}{||\hat{\bar{r}}_{\ell}||_{L^1(\mathbb{R}^d)}},\qquad q_{\ell}^*(\omega') =\frac{|\hat{\bar{r}}'_{\ell}(\omega)|}{||\hat{\bar{r}}'_{\ell}||_{L^1(\mathbb{R})}},
\end{equation*}
which we show in the following theorem. 

\begin{theorem}[minimizing probability densities]\label{thm:minimizing_densities}
    The probability densities 
\begin{equation*}
    p_{\ell}^*(\omega) = \frac{|\hat{\bar{r}}_{\ell}(\omega)|}{||\hat{\bar{r}}_{\ell}||_{L^1(\mathbb{R}^d)}},\qquad q_{\ell}^*(\omega') =\frac{|\hat{\bar{r}}'_{\ell}(\omega)|}{||\hat{\bar{r}}'_{\ell}||_{L^1(\mathbb{R})}},
\end{equation*}
are the minimizers of 
{\small
\begin{equation*}
    \min_{p_{\ell},q_{\ell}}\bigg\{\frac{1}{(2\pi)^d}\int_{\mathbb{R}^d}\frac{|\hat{\bar{r}}_{\ell}(\omega)|^2}{p_{\ell}(\omega)}\:d\omega + \frac{1}{2\pi}\int_{\mathbb{R}}\frac{|\hat{\bar{r}}'_{\ell}(\omega')|^2}{q_{\ell}(\omega')}\:d\omega'\: : \int_{\mathbb{R}^d}p_{\ell}(\omega)\:d\omega = 1, \int_{R}q_{\ell}(\omega')\:d\omega' = 1 \bigg\}.
\end{equation*}}
\end{theorem}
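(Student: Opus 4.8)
The plan is to exploit the fact that the objective functional \emph{decouples}: it is a sum of one term depending only on $p_{\ell}$ and one depending only on $q_{\ell}$, and the two normalization constraints are likewise independent. Consequently the joint minimization splits into two separate constrained problems, and it suffices to minimize
\[
I[p_{\ell}] := \frac{1}{(2\pi)^d}\int_{\mathbb{R}^d}\frac{|\hat{\bar{r}}_{\ell}(\omega)|^2}{p_{\ell}(\omega)}\,d\omega
\quad\text{subject to}\quad \int_{\mathbb{R}^d}p_{\ell}(\omega)\,d\omega = 1,
\]
and analogously for the $q_{\ell}$ term. I would treat each subproblem by the same argument, so I describe only the $p_{\ell}$ case.

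The key step is a single application of the Cauchy--Schwarz inequality. Writing $|\hat{\bar{r}}_{\ell}| = \big(|\hat{\bar{r}}_{\ell}|/\sqrt{p_{\ell}}\big)\cdot\sqrt{p_{\ell}}$ and applying Cauchy--Schwarz gives
\[
\|\hat{\bar{r}}_{\ell}\|_{L^1(\mathbb{R}^d)}^2
= \left(\int_{\mathbb{R}^d}\frac{|\hat{\bar{r}}_{\ell}(\omega)|}{\sqrt{p_{\ell}(\omega)}}\,\sqrt{p_{\ell}(\omega)}\,d\omega\right)^{\!2}
\le \left(\int_{\mathbb{R}^d}\frac{|\hat{\bar{r}}_{\ell}(\omega)|^2}{p_{\ell}(\omega)}\,d\omega\right)\!\left(\int_{\mathbb{R}^d}p_{\ell}(\omega)\,d\omega\right).
\]
Since the second factor equals $1$ by the constraint, this yields the uniform lower bound $I[p_{\ell}] \ge (2\pi)^{-d}\|\hat{\bar{r}}_{\ell}\|_{L^1(\mathbb{R}^d)}^2$ for every admissible density. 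Equality in Cauchy--Schwarz holds precisely when the two factors are proportional, i.e.\ when $|\hat{\bar{r}}_{\ell}|/\sqrt{p_{\ell}} \propto \sqrt{p_{\ell}}$, which forces $p_{\ell}(\omega)\propto|\hat{\bar{r}}_{\ell}(\omega)|$. Imposing the normalization $\int p_{\ell} = 1$ then pins down the proportionality constant and gives exactly $p_{\ell}^*(\omega) = |\hat{\bar{r}}_{\ell}(\omega)|/\|\hat{\bar{r}}_{\ell}\|_{L^1(\mathbb{R}^d)}$. One checks directly that this candidate is a bona fide probability density and that substituting it into $I$ attains the lower bound, confirming it is a global minimizer rather than merely a critical point. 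The identical computation on $\mathbb{R}$, with the factor $(2\pi)^{-1}$ in place of $(2\pi)^{-d}$, produces $q_{\ell}^*(\omega') = |\hat{\bar{r}}_{\ell}'(\omega')|/\|\hat{\bar{r}}_{\ell}'\|_{L^1(\mathbb{R})}$.

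I do not expect a serious analytical obstacle here; the Cauchy--Schwarz route is clean and sidesteps the verification-of-minimality issue that a Lagrange-multiplier (calculus-of-variations) argument would leave open, since convexity of $t\mapsto 1/t$ would otherwise have to be invoked separately. The only points requiring care are the standing hypotheses that make the problem well posed: one needs $\hat{\bar{r}}_{\ell}\in L^1$ (so that $\|\hat{\bar{r}}_{\ell}\|_{L^1}<\infty$ and the normalization is finite) and $\hat{\bar{r}}_{\ell}\not\equiv 0$ (so that the denominator in $p_{\ell}^*$ is nonzero and the density is well defined), with the same conditions on $\hat{\bar{r}}_{\ell}'$; these are guaranteed by the assumption $Q\in S$ and the residual structure carried through the derivation. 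A minor technical remark is that admissible densities must be positive $\hat{\bar{r}}_{\ell}$-almost everywhere for $I[p_{\ell}]$ to be finite, but this is automatic for any near-optimal $p_{\ell}$ and does not affect the equality analysis.
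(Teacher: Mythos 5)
Your proof is correct, and it takes a genuinely different route from the paper's. The paper proves this theorem by a first-variation (calculus-of-variations) argument: it changes variables to unnormalized densities $\bar{p}_{\ell},\bar{q}_{\ell}$ so that the normalization constraints are absorbed into the objective as products $\int |\hat{\bar{r}}_{\ell}|^2/\bar{p}_{\ell}\cdot\int\bar{p}_{\ell}$, perturbs along arbitrary directions $v,u$, sets $\tfrac{d}{d\varepsilon}f(0)=0$, and reads off $\bar{p}_{\ell}\propto|\hat{\bar{r}}_{\ell}|$ and $\bar{q}_{\ell}\propto|\hat{\bar{r}}'_{\ell}|$ from the resulting Euler--Lagrange condition. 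That argument identifies a stationary point but, as written, does not certify that it is a global minimizer. Your Cauchy--Schwarz argument does exactly what you say it does: it produces the uniform lower bound $I[p_{\ell}]\ge(2\pi)^{-d}\|\hat{\bar{r}}_{\ell}\|_{L^1(\mathbb{R}^d)}^2$ over all admissible densities and shows the bound is attained precisely at $p_{\ell}^*$, so global optimality comes for free; it is also more elementary, avoiding the change of variables and the perturbation calculus entirely. Both proofs share the initial decoupling observation (the objective and constraints separate in $p_{\ell}$ and $q_{\ell}$), the paper implicitly and you explicitly. Your remarks on the standing hypotheses ($\hat{\bar{r}}_{\ell}\in L^1$, not identically zero, positivity of near-optimal densities on the support of $\hat{\bar{r}}_{\ell}$) address well-posedness issues the paper does not discuss. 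In short, your proposal is a valid and in one respect stronger proof of the same statement.
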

\begin{proof}
We conduct the change of variables 
\begin{equation*}
    p_{\ell}(\omega) = \frac{\bar{p}_{\ell}(\omega)}{\int_{\mathbb{R}^d}\bar{p}_{\ell}(\omega)\:d\omega},\qquad q_{\ell}(\omega') = \frac{\bar{q}_{\ell}(\omega')}{\int_{\mathbb{R}}\bar{q}_{\ell}(\omega')\:d\omega'}.
\end{equation*}
This implies that $\int_{\mathbb{R}^d}p_{\ell}(\omega)\:d\omega=1$ and $\int_{\mathbb{R}}q(\omega')\:d\omega'=1$ for any $\bar{p}:\mathbb{R}^d\mapsto [0,\infty)$ and $\bar{q}:\mathbb{R}\mapsto [0,\infty)$. Now for any $v:\mathbb{R}^d\mapsto\mathbb{R}$, $u:\mathbb{R}\mapsto \mathbb{R}$, and $\varepsilon>0$ let $f(\varepsilon) = f_{1}(\varepsilon) + f_{2}(\varepsilon)$ where $f_{1}$ and $f_2$ are defined as 
\begin{align*}
    f_1(\varepsilon) &= \int_{\mathbb{R}^d}\frac{|\hat{\bar{r}}_{\ell}(\omega)|^2}{\bar{p}_{\ell}(\omega) + \varepsilon v(\omega)}\:d\omega\int_{\mathbb{R}^d}\bar{p}_{\ell}(\omega) + \varepsilon v(\omega)\:d\omega;\\
    f_2(\varepsilon) &= \int_{\mathbb{R}}\frac{|\hat{\bar{r}}'_{\ell}(\omega')|^2}{\bar{q}_{\ell}(\omega') + \varepsilon u(\omega')}\:d\omega'\int_{\mathbb{R}}\bar{q}_{\ell}(\omega') + \varepsilon u(\omega')\:d\omega'.
\end{align*}
Now we look for optimal distributions $\bar{p}_{\ell}$ and $\bar{q}_{\ell}$ by solving $\frac{d}{d\varepsilon}f(0) =0$. We calculate

\begin{align*}
    \frac{df}{d\varepsilon}(0) & = \frac{d f_1}{d\varepsilon}(0) + \frac{d f_2}{d\varepsilon}(0)\\
    &=\int_{\mathbb{R}^d}\left(c_2-c_1\frac{|\hat{\bar{r}}_{\ell}(\omega)|^2}{\bar{p}_{\ell}^2(\omega)}\right)v(\omega)\:d\omega + \int_{\mathbb{R}}\left(c_3-c_4\frac{|\hat{\bar{r}}'_{\ell}(\omega')|^2}{\bar{q}_{\ell}^2(\omega')}\right)u(\omega')\:d\omega',
\end{align*}
where
\begin{align*}
    c_1 &= \int_{\mathbb{R}^d}\bar{p}_{\ell}(\tilde{\omega})\:d\tilde{\omega},\qquad c_2=\int_{\mathbb{R}^d}\frac{|\hat{\bar{r}}_{\ell}(\tilde{\omega})|^2}{\bar{p}_{\ell}(\tilde{\omega})}\:d\tilde{\omega}\\
    c_3 &= \int_{\mathbb{R}}\bar{q}_{\ell}(\tilde{\omega}')\:d\tilde{\omega}',\qquad c_4 =\int_{\mathbb{R}}\frac{|\hat{\bar{r}}'_{\ell}(\tilde{\omega}')|^2}{\bar{q}_{\ell}(\tilde{\omega}')}\:d\tilde{\omega}'.
\end{align*}
Thus we find $\bar{p}_{\ell}(\omega) = \left(\frac{c_1}{c_2}\right)^{1/2}|\hat{\bar{r}}_{\ell}(\omega)|$ and $\bar{q}_{\ell}(\omega') = \left(\frac{c_3}{c_4}\right)^{1/2}|\hat{\bar{r}}'_{\ell}(\omega')|$, which then implies that the minimizing densities are given by
\begin{equation}
    p_{\ell}^*(\omega) = \frac{|\hat{\bar{r}}_{\ell}(\omega)|}{||\hat{\bar{r}}_{\ell}||_{L^1(\mathbb{R}^d)}},\qquad q_{\ell}^*(\omega') =\frac{|\hat{\bar{r}}'_{\ell}(\omega)|}{||\hat{\bar{r}}'_{\ell}||_{L^1(\mathbb{R})}},
\end{equation}
\end{proof}

\section{Hyperparameters for numerical examples}\label{app:experimental_design}

Here we provide hyperparameter settings used to obtain the results for each of our numerical examples. 
\medskip

\noindent{\bf Section~\ref{sec:multiscale} A multiscale target function.} In this example, we compared a network trained with our block-by-block algorithm and a network trained with the ADAM optimization algorithm.  Hyperparameter settings for block-by-block training are pictured in Table~\ref{tab:multiscale_block_by_block} and hyperparameter settings for the network trained with ADAM are pictured in Table~\ref{tab:multiscale_ADAM}.
\begin{table}[!htb]
    \scriptsize\centering
    \begin{tabular}{c|c|c|c|c|c|c|c|c|c|c}
         Training method & $N$ &$N_{test}$ & $W$ & $L$ & $M$ & $\gamma$ & $\gamma'$ & $\delta$ & $\delta'$ &$\lambda_1,\dotsc,\lambda_{L}$\\
        \hline
         block-by-block
         & $2000$ & $10000$ & $6$ & $10$ & $20000$ & $10$ & $10$ & $2.4^2$ & $2.4^2$ & $1e-4$
    \end{tabular}
    \caption{Section~\ref{sec:multiscale} hyperparameter settings for block-by-block training}
    \label{tab:multiscale_block_by_block}
\end{table}
\begin{table}[!htb]
    \scriptsize\centering
    \begin{tabular}{c|c|c|c|c|c|c|c|c}
         Training method & $N$ &$N_{test}$ & $W$ & $L$ & Epochs & learning rate & $\lambda$ & batch size\\
        \hline
        ADAM
         & $2000$ & $10000$ & $6$ & $10$ & $15000$ & $1e-3$ & $1e-4$ & $256$
    \end{tabular}
    \caption{Section~\ref{sec:multiscale} hyperparameter settings for ADAM based training}
    \label{tab:multiscale_ADAM}
\end{table}
\medskip

\noindent{\bf Section~\ref{sec:discont}: A discontinuous target function.} In this, example we compared two different training methods.
\begin{table}[!htb]
    \scriptsize\centering
    \begin{tabular}{c|c|c|c|c|c|c|c|c|c|c}
        Training method & $N$ &$N_{test}$ & $W$ & $L$ & $M$ & $\gamma$ & $\gamma'$ & $\delta$ & $\delta'$ &$\lambda_1,\dotsc,\lambda_{L}$\\
        \hline
        Method 1 & $1000$ & $10000$ & $6$ & $10$ & $5000$ & $10$ & $10$ & $2.4^2$ & $2.4^2$ & $1e-6$ \\
        Method 2 & $1000$ & $10000$ & $6$ & $10$ & $5000$ & $10$ & N/A& $2.4^2$ & N/A & $1e-6$ 
    \end{tabular}
    \caption{Section~\ref{sec:discont} hyperparameter settings}
    \label{tab:discont}
\end{table}
The first was our block-by-block training algorithm (Method 1) and the second was our block-by-block training algorithm, but where the frequencies $\bm{\omega}_{\ell}'$ were sampled just once from a normal distribution and then never updated during training (Method 2). The hyperparameter settings for both methods are included in Table~\ref{tab:discont}.

\medskip

\noindent{\bf Section~\ref{sec:multidim}: A multidimensional target function.} In this example, we approximated a multidimensional target function using a network trained with our block-by-block algorithm. Hyperparameter settings to reproduce these results are included in Table~\ref{tab:multidim}.

\begin{table}[!htb]
    \scriptsize\centering
    \begin{tabular}{c|c|c|c|c|c|c|c|c|c|c}
         Training method & $N$ &$N_{test}$ & $W$ & $L$ & $M$ & $\gamma$ & $\gamma'$ & $\delta$ & $\delta'$ &$\lambda_1,\dotsc,\lambda_{L}$\\
        \hline
        block-by-block & $20^3$ & $25^3$ & $4$ & $10$ & $5000$ & $20$ & $20$ & $2.4^2/3$ & $2.4^2$ & $1e-4$ 
    \end{tabular}
    \caption{Section~\ref{sec:multidim} hyperparameter settings}
    \label{tab:multidim}
\end{table}

\end{document}